
\def\isarxiv{1}

\def\paperTitle{Neural Algorithmic Reasoning for Hypergraphs with Looped Transformers}

\def\paperRTitle{\paperTitle}

\def\paperAuthor{
Zekai Huang\thanks{The Ohio State University.}
\and
Yingyu Liang\thanks{The University of Hong Kong.} 
\and 
Zhenmei Shi\thanks{University of Wisconsin-Madison.}
\and 
Zhao Song\thanks{\texttt{ magic.linuxkde@gmail.com}. The Simons Institute for the Theory of Computing at the University of California, Berkeley.}
\and 
Zhen Zhuang\thanks{University of Minnesota.}
}



\ifdefined\isarxiv
\documentclass[11pt]{article}
\usepackage[numbers]{natbib}
\else
\documentclass{article}

\fi

\ifdefined\isarxiv

\usepackage{amsmath}
\usepackage{amsthm}
\usepackage{amssymb}
\usepackage{algorithm}
\usepackage{subfig}
\usepackage{algpseudocode}
\usepackage{graphicx}
\usepackage{grffile}
\usepackage{wrapfig,epsfig}
\usepackage{url}
\usepackage{xcolor}
\usepackage{epstopdf}

\usepackage{bbm}
\usepackage{dsfont}

\else 

\usepackage{microtype}
\usepackage{graphicx}
\usepackage{subcaption}
\usepackage{booktabs} 

\usepackage{hyperref}


\usepackage{icml2026}



\usepackage{amsmath}
\usepackage{amssymb}
\usepackage{mathtools}
\usepackage{amsthm}


\fi

\ifdefined\isarxiv

\else
\usepackage{algpseudocode}
\fi
 
\allowdisplaybreaks

\ifdefined\isarxiv

\usepackage{tikz}
\usepackage{hyperref}  
\hypersetup{colorlinks=true,citecolor=blue,linkcolor=blue} 
\usetikzlibrary{arrows}
\usepackage[margin=1in]{geometry}

\else
\fi
 
\graphicspath{{./figs/}}

\theoremstyle{plain}
\newtheorem{theorem}{Theorem}[section]
\newtheorem{lemma}[theorem]{Lemma}
\newtheorem{definition}[theorem]{Definition}

\newtheorem{remark}[theorem]{Remark}


\newcommand{\wh}{\widehat}
\newcommand{\wt}{\widetilde}

\newcommand{\R}{\mathbb{R}}

\renewcommand{\hat}{\wh}

\newcommand{\attn}{\mathrm{attn}}
\newcommand{\mlp}{\mathrm{mlp}}
\newcommand{\false}{\mathbf{false}}
\newcommand{\true}{\mathbf{true}}
\newcommand{\idx}{\mathrm{idx}}
\newcommand{\val}{\mathrm{val}}
\newcommand{\cur}{\mathrm{cur}}
\newcommand{\best}{\mathrm{best}}
\newcommand{\visit}{\mathrm{visit}}
\newcommand{\node}{\mathrm{node}}
\newcommand{\dists}{\mathrm{dists}}
\newcommand{\termination}{\mathrm{termination}}
\newcommand{\prev}{\mathrm{prev}}
\newcommand{\masked}{\mathrm{masked}}
\newcommand{\start}{\mathrm{start}}
\newcommand{\changes}{\mathrm{changes}}
\newcommand{\iszero}{\mathrm{iszero}}
\newcommand{\candidates}{\mathrm{candidates}}
\newcommand{\hyperedge}{\mathrm{hyperedge}}
\newcommand{\intersection}{\mathrm{intersection}}
\newcommand{\helly}{\mathrm{helly}}

\DeclareMathOperator{\dist}{dist}

\ifdefined\isarxiv

\else
\icmltitlerunning{\paperRTitle}

\fi

\begin{document}

\ifdefined\isarxiv

\date{}
\title{\paperTitle}
\author{\paperAuthor}

\else

\twocolumn[
  \icmltitle{\paperTitle}


  \icmlsetsymbol{equal}{*}

  \begin{icmlauthorlist}
    \icmlauthor{Firstname1 Lastname1}{equal,yyy}
    \icmlauthor{Firstname2 Lastname2}{equal,yyy,comp}
    \icmlauthor{Firstname3 Lastname3}{comp}
    \icmlauthor{Firstname4 Lastname4}{sch}
    \icmlauthor{Firstname5 Lastname5}{yyy}
    \icmlauthor{Firstname6 Lastname6}{sch,yyy,comp}
    \icmlauthor{Firstname7 Lastname7}{comp}
    \icmlauthor{Firstname8 Lastname8}{sch}
    \icmlauthor{Firstname8 Lastname8}{yyy,comp}
  \end{icmlauthorlist}

  \icmlaffiliation{yyy}{Department of XXX, University of YYY, Location, Country}
  \icmlaffiliation{comp}{Company Name, Location, Country}
  \icmlaffiliation{sch}{School of ZZZ, Institute of WWW, Location, Country}

  \icmlcorrespondingauthor{Firstname1 Lastname1}{first1.last1@xxx.edu}
  \icmlcorrespondingauthor{Firstname2 Lastname2}{first2.last2@www.uk}

  \icmlkeywords{Machine Learning, ICML}

  \vskip 0.3in
]

\printAffiliationsAndNotice{} 

\fi

\ifdefined\isarxiv
\begin{titlepage}
  \maketitle
  \begin{abstract}
    Looped Transformers have shown exceptional neural algorithmic reasoning capability in simulating traditional graph algorithms, but their application to more complex structures like hypergraphs remains underexplored. Hypergraphs generalize graphs by modeling higher-order relationships among multiple entities, enabling richer representations but introducing significant computational challenges. In this work, we extend the Loop Transformer architecture's neural algorithmic reasoning capability to simulate hypergraph algorithms, addressing the gap between neural networks and combinatorial optimization over hypergraphs. Specifically, we propose a novel degradation mechanism for reducing hypergraphs to graph representations, enabling the simulation of graph-based algorithms, such as Dijkstra's shortest path. Furthermore, we introduce a hyperedge-aware encoding scheme to simulate hypergraph-specific algorithms, exemplified by Helly's algorithm. We establish theoretical guarantees for these simulations, demonstrating the feasibility of processing high-dimensional and combinatorial data using Loop Transformers. This work highlights the potential of Transformers as general-purpose algorithmic solvers for structured data.

  \end{abstract}
  \thispagestyle{empty}
\end{titlepage}

\newpage

\else

\begin{abstract}

\end{abstract}

\fi



\section{Introduction}\label{sec:intro}

Algorithms underlie many of the technological innovations that shape our daily lives, from route planners~\citep{hnr68} to search engines~\citep{bp98}. In parallel, the advent of Large Language Models (LLMs), which are built on the Transformer architecture~\citep{vsp+17}, has dramatically extended the boundary of what is achievable through machine learning. Noteworthy systems such as GPT-4o~\citep{gpt4o}, Claude~\citep{a24}, and the latest OpenAI models~\citep{o24} have already propelled advances across a variety of domains, including AI agents~\citep{cyl+24}, AI search~\citep{o24}, and conversational AI~\citep{mwy+23,zkaw23,lct+24}. As LLMs become increasingly capable, they open up new possibilities for integrating algorithmic reasoning into neural models.

Building on this idea, neural algorithmic reasoning~\citep{vb21} has recently emerged as a means to fuse the abstraction and guarantees of algorithms with the representational flexibility of neural networks. By learning to mimic the step-by-step operations of well-known procedures, neural networks can capture the generalization capabilities of algorithms. Following a self-regression pattern, \cite{dkd+18} introduces an embedding representation for iterative graph algorithms, facilitating scalable learning of steady-state solutions. Meanwhile, \cite{kdz+17} employs reinforcement learning and graph embedding to automate heuristic design for NP-hard combinatorial optimization problems. By supervising the network on each intermediate state rather than only the end result, \cite{vyp+20} shows how learned subroutines can transfer across related tasks like Bellman-Ford algorithm~\citep{b58} and Prim’s algorithm~\citep{p57}. Encouraging results have also been obtained in simulating graph algorithms using looped Transformers~\citep{dlf24}, suggesting that Transformers may be capable of executing procedural knowledge at scale.

Extending this paradigm to more complex relational structures, such as hypergraphs, presents new opportunities and challenges. Hypergraphs naturally allow each hyperedge to contain multiple vertices, making them especially suitable for capturing higher-order interactions present in many real-world tasks. Recent studies~\citep{xkwm22,ftvp23,yxp+24} illustrate how hypergraph-based representations can enhance relational reasoning, manage complex data interdependencies, and support a larger subset of relational algebra operations. However, while standard graphs often rely on symmetrical adjacency matrices to encode connections, hypergraphs are typically represented by incidence matrices, posing additional challenges. For instance, Helly's algorithm~\citep{e30}, which assesses whether a hypergraph exhibits the Helly property \citep{buz02,v02}, requires operations over these incidence representations and has been shown to be NP-Hard in certain extensions~\citep{dps12}. Similarly, converting hypergraphs to graph-like structures often inflates feature dimensions linearly with the number of vertices due to matrix-storage demands~\citep{b13,lks20}. These insights collectively raise a central question:

\begin{center}
  \emph{Can looped Transformers achieve neural algorithmic reasoning on hypergraphs \\ using $O(1)$ feature dimensions and $O(1)$ layers?}
\end{center}

\paragraph{Our Contribution.}
In this paper, we answer this question affirmatively by extending the results in~\cite{dlf24}, which demonstrated the ability of Loop Transformers to simulate graph algorithms. 

We propose and investigate the hypothesis that Loop Transformers, with their iterative processing and representational flexibility, are capable of simulating hypergraph algorithms. By leveraging the inherent capacity of Transformers for parallel processing and multi-head attention, we aim to demonstrate that these models can effectively encode and operate over hypergraph structures. The contributions of our work are outlined as follows:
\begin{itemize}
\item We design a degradation mechanism (Theorem~\ref{thm:degradation}) to simulate graph-based algorithms, such as Dijkstra, BFS, and DFS, on hypergraphs by dynamically degrading the incidence matrix to an adjacency matrix implicitly. This process can be simulated by a looped Transformer with constant layers and constant feature dimensions.
\item We propose a novel encoding scheme to simulate Helly's algorithm (Theorem~\ref{thm:helly}), which incorporates hyperedge-specific operations into the iterative processes of Transformers. This can also be simulated by a looped Transformer with constant layers and constant feature dimensions.
\end{itemize}

\paragraph{Roadmap.} We introduce the works related to our paper in Section~\ref{sec:related_work}. 
In Section~\ref{sec:preli}, we present the basic concepts, e.g., hypergraph and Loop Transformers. In Section~\ref{sec:result}, we detail our main results. 
In Section~\ref{sec:operation}, we introduce the construction of some basic operations. 
In Section~\ref{sec:simulation}, we present some simulation results based on the basic operation. In Section~\ref{sec:conclusion}, we give a conclusion of our paper.
\section{Related Work}\label{sec:related_work}

\subsection{Neural Network Execute Algorithms}
In recent years, the integration of neural networks with algorithmic execution has gained significant attention, leading to the emergence of a body of work aimed at improving the efficiency and performance of algorithmic tasks through deep learning techniques. This line of research focuses on leveraging the power of neural networks to either approximate traditional algorithms or directly replace them with learned models. \cite{ss92} established the computational universality of finite recurrent neural networks with sigmoidal activation functions by proving their capability to simulate any Turing Machine. Later, \cite{pm21} proved that Transformer architectures achieve Turing completeness through hard attention mechanisms, which enable effective computation and access to internal dense representations. Building with the framework of looped transformer, \cite{grs+23} uses looped transformers as the building block to make a programmable computer, showcasing the latent capabilities of Transformer-based neural networks.
\cite{dlf24} demonstrated that looped transformers can implement various graph algorithms, including Dijkstra's shortest path, Breadth-First Search (BFS), Depth-First Search (DFS), and Kosaraju's algorithm for strongly connected components, through their multi-head attention mechanism.

\subsection{Hypergraphs in Neural Networks}

Hypergraphs have recently emerged as a powerful mathematical model for representing complex relationships in data, and they have found promising applications in deep learning. Several works have explored leveraging hypergraph-based representations to improve the performance of neural architectures in various domains. For instance, \cite{fyz+19} introduced a hypergraph-based approach to enhance graph neural networks (GNNs), enabling better multi-way interaction modeling for tasks like node classification and link prediction, while subsequent works extended convolutional networks to hypergraphs~\cite{yny+19}, incorporated hypergraph attention for improved interpretability~\cite{bzt21}, and leveraged hypergraphs in deep reinforcement learning for faster convergence~\cite{bzg+22}. Hypergraphs have also been employed as a powerful framework for advanced reasoning. Recent research~\cite{xkwm22,ftvp23,yxp+24} elucidates their capacity to refine relational inference, encapsulate intricate data interdependencies, and facilitate a broader spectrum of operations within relational algebra, thereby augmenting the expressiveness and computational efficacy of reasoning paradigms.

\subsection{Looped Transformer}
First introduced by~\cite{dgv+19}, the Universal Transformer, a variant of the Transformer with a recurrent inductive bias, can be regarded as a foundational model for looped Transformers. Empirical evidence from~\cite{ylnp23} suggests that increasing the number of loop iterations enhances performance in various data-fitting tasks while requiring fewer parameters. Furthermore, \cite{gsr+24} establishes a theoretical guarantee that the looped Transformer can execute multi-step gradient descent, thereby ensuring convergence to algorithmic solutions. Recent research~\cite{af23,gyw+24} has deepened our understanding of how specific algorithms can be emulated and how their training dynamics facilitate convergence, particularly in the context of in-context learning. \cite{gsr+24,cll+24} showed that looped transformers can efficiently do in-context learning by multi-step gradient descent. \cite{grs+23} uses Transformers as the building block to build a programmable computer, showcasing the latent capabilities of Transformer-based neural networks. Beyond~\cite{grs+23}, \cite{lss+24_relu} proves that a looped 23-layer $\mathrm{ReLU-MLP}$ is capable of performing the basic necessary operation of a programmable computer. 
\section{Preliminaries}\label{sec:preli}
In Section~\ref{sec:preli:notation}, we introduce the fundamental notations used throughout this work. Section~\ref{sec:preli:simulation} outlines the concept of simulation, while Section~\ref{sec:preli:hypergraph} provides an overview of essential concepts related to hypergraphs. Lastly, Section~\ref{sec:preli:loop_transformer} describes the architecture of the looped transformer.

\subsection{Notation}\label{sec:preli:notation}
We represent the set $\{1, 2, \ldots, n\}$ as $[n]$.  
For a matrix $A \in \R^{m \times n}$, the $i$-th row is denoted by $A_i \in \R^n$, and the $j$-th column is represented as $A_{*,j} \in \R^m$, where $i \in [m]$ and $j \in [n]$.  
For $A \in \R^{m \times n}$, the $j$-th entry of the $i$-th row $A_i \in \R^n$ is denoted by $A_{i,j} \in \R$.  
The identity matrix of size $d \times d$ is denoted by $I_d \in \R^{d \times d}$.  
The vector ${\bf 0}_n$ denotes a length-$n$ vector with all entries equal to zero, while ${\bf 1}_n$ denotes a length-$n$ vector with all entries equal to one.  
The matrix ${\bf 0}_{n \times d}$ represents an $n \times d$ matrix where all entries are zero.  
The inner product of two vectors $a, b \in \R^d$ is expressed as $a^\top b$, where $a^\top b = \sum_{i=1}^d a_i b_i$.

\subsection{Simulation}\label{sec:preli:simulation}
\begin{definition}[Simulation, Definition 3.1 in~\cite{dlf24}]
    We define the following:
    \begin{itemize}
        \item Let $h_F : \mathcal X \to \mathcal Y$ be the function that we want to simulate.
        \item Let $h_T: \mathcal X' \to \mathcal Y'$ be a neural network.
        \item Let $g_e : \mathcal X \to \mathcal X'$ be an encoding function.
        \item Let $g_d : \mathcal Y' \to \mathcal Y$ be a decoding function.
    \end{itemize}
    We say that a neural network $h_T$ can simulate an algorithm step $h_F$ if for all input $x \in \mathcal X$, $h_F(x) = g_d(h_T(g_e(x)))$. We say that a looped transformer can simulate an algorithm if it can simulate each algorithm step.
\end{definition}

\subsection{Hypergraph}\label{sec:preli:hypergraph}

A weighted hypergraph is expressed as $H := (V, E, w)$, where $w$ is a function assigning a real-valued weight to each hyperedge. The total number of vertices in the hypergraph is $n_v := |V|$, and the total number of hyperedges is $n_e := |E|$. The vertices are assumed to be labeled sequentially from $1$ to $n_v$, and the hyperedges are labeled from $1$ to $n_e$.

We define the incident matrix of the weighted hypergraph as follows:
\begin{definition}[Incident matrix of hypergraph]\label{def:incident_mat}
    Let $H=(V,E,w)$ be a weighted hypergraph, where $V=\{v_1,v_2,\ldots,v_{n_v}\}$ is the set of vertices and $E=\{e_1,e_2,\ldots,e_{n_e}\}$ is the set of hyperedges, with $n_v = |V|$ and $n_e = |E|$. Each hyperedge $e_j \in E$ is associated with a weight $w(e_j) \in \R_+$ and consists of a subset of vertices from $V$.
    The incident matrix $A \in \R^{n_v \times n_e}_+$ of $H$ is defined such that the rows correspond to vertices and the columns correspond to hyperedges. For each vertex $v_i \in V$ and hyperedge $e_j \in E$, the entry $A_{i,j}$ is given by
    \begin{align*}
        A_{i,j} = 
        \begin{cases}
        w(e_j) & \mathrm{if} ~ v_i \in e_j, \\
        0       & \mathrm{otherwise.}
        \end{cases}
    \end{align*}
\end{definition}
In this paper, we use $X \in \R^{K \times d}$ to represent the input matrix, where $d$ is the feature dimension and $K = \max\{n_v, n_e\} + 1$ is the row number we need for simulation. To match the dimension of $X$ and incident matrix, we use a padded version of $A$, which is defined as its original entries of $A$ preserved in the bottom-right block:
\begin{definition}[padded version of incident matrix]\label{def:pad_incident_mat}
    Let incident matrix of hypergraph $A \in \R^{n_v,n_e}_+$ be defined in Definition~\ref{def:incident_mat}, let $K \in \mathbb{N}$ safisfies $K \geq \max\{n_v, n_e\} + 1$ is the row number of $X$, we define the padded version of incident matrix $A$ as: 
    \begin{itemize}
        \item {\bf Part 1.} If $n_e > n_v$, we define:
        \begin{align*}
            \wt{A} := 
            \begin{bmatrix}
                0 & \mathbf{0}_{n_e}^\top \\
                \mathbf{0}_{n_v} & A \\
                \mathbf{0}_{K-n_v-1} & \mathbf{0}_{(K-n_v-1) \times n_e}
            \end{bmatrix}.
        \end{align*}
        \item {\bf Part 2.} If $n_e < n_v$, we define:
        \begin{align*}
            \wt{A} := 
            \begin{bmatrix}
                0 & \mathbf{0}_{n_e}^\top & \mathbf{0}_{K - n_e - 1}^\top \\
                \mathbf{0}_{n_v} & A & \mathbf{0}_{n_v \times (K - n_e - 1)}
            \end{bmatrix}.
        \end{align*}
        \item {\bf Part 3.} If $n_e = n_v$, we define:
        \begin{align*}
            \wt{A} := 
            \begin{bmatrix}
                0 & \mathbf{0}_{n_e}^\top\\
                \mathbf{0}_{n_v} & A
            \end{bmatrix}.
        \end{align*}
    \end{itemize}
    
\end{definition}

\subsection{Looped Transformers}\label{sec:preli:loop_transformer}
Following the setting of~\cite{dlf24}, we use standard transformer layer~\cite{vsp+17} with an additional attention mechanism that incorporates the incident matrix. The additional attention mechanism is defined as follows:
\begin{definition}[Single-head attention]\label{def:single_attn}
    Let $W_Q, W_K \in \R^{d \times d_a}$ be the weight matrices of query and key, $W_V \in \R^{d \times d}$ be the weight matrix of value, and $\sigma$ be the hardmax\footnote{The hardmax, a.k.a., $\arg\max$, is defined by $[\sigma(\Phi)]_i := \sum_{k \in K}  e_k / |K|$, where $e_k$ is the standard basis vector for any $k \in [K]$ and $K = \{k ~|~ \Phi_{ik} = \max\{\Phi_i\} \}.$ } function. We define the single-head attention $\psi^{(i)}$ as
\begin{align*}
\psi^{(i)} (X,\wt{A}) := \wt{A} \sigma (X W^{(i)}_Q {W^{(i)}_K}^\top X^\top) X W^{(i)}_V.
\end{align*}
    
\end{definition}

\begin{remark}
    In this paper, we set $d_a = 2$.
\end{remark}

 The $\psi$ function is an essential construction in the definition of multi-head attention:
\begin{definition}[Multi-head attention]\label{def:multi_attn}
    Let $\psi$ be defined in Definition~\ref{def:single_attn}, let $\wt{A}$ be defined in Definition~\ref{def:pad_incident_mat}. We define the multi-head attention $\psi^{(i)}$ as
    \begin{align*}
        f_\attn (X,\wt{A}) :=~ \sum_{i \in M_A} \psi^{(i)} (X, \wt{A}) + \sum_{i \in M_{A^\top}} \psi^{(i)} (X, \wt{A}^\top)
        &+ \sum_{i \in M} \psi^{(i)} (X, I_{n+1}) + X,
    \end{align*}
    where $M_A$, $M_{A^\top}$, $M$ are the index set of the attention incorporated the incident matrix, and the attention incorporated the transpose of the incident matrix, and attention heads for the standard attention which is defined in~\cite{vsp+17}.
\end{definition}

\begin{remark}
    The total number of attention heads is $|M| + |M_A| + |M_{A^\top}|$.
\end{remark}

For the MLP (Multilayer Perceptron) layer, we have the following definition.
\begin{definition}[MLP layer]\label{def:mlp}
    Let $\phi$ be the ReLU function, let $W \in \R^{d \times d}$ be the weight of the MLP, where $d$ is the feature dimension of $X$, let $m$ be the number of layers. For $j = [m]$, we define the MLP layer as
    $
        f_\mlp (X) := Z^{(m)} W^{(m)} + X,
    $
    where $Z^{(j+1)} := \phi (Z^{(j)} W^{(j)})$ and $Z^{(1)} := X$.
\end{definition}

\begin{remark}
    In the construction of this paper, we set the number of layers $m = 4$.
\end{remark}

Combine the definition of multi-head attention and MLP layer, and we show the definition of the transformer layer:
\begin{definition}[Transformer layer]\label{def:transformer}
     Let $\wt{A}$ be defined in Definition~\ref{def:pad_incident_mat}, let $f_\attn$ be defined in Definition~\ref{def:multi_attn}, let $f_\mlp$ be defined in Definition~\ref{def:mlp}. We define the transformer layer as
     \begin{align*}
         f(X,\wt{A}) := f_\mlp (f_\attn (X,\wt{A})).
     \end{align*}
\end{definition}

\begin{definition}[Multi layer Transformer]\label{def:multi_transformer}
     Let $\wt{A}$ be defined in Definition~\ref{def:pad_incident_mat}, let $m = O(1)$ denote the layer of transformers. Let the transformer layer $f$ be defined in~\ref{def:transformer}.  We define the multi-layer transformer as
     $
         h_T (X,\wt{A}) := f_m \circ f_{m-1} \circ \cdot \circ f_1(X,\wt{A}).
     $
\end{definition}

    In the construction of this paper, we set $X \in \R^{K \times d}$, where $K$ is defined in Definition~\ref{def:pad_incident_mat}, and $d$ is a constant independent of $K$. The matrix $X$ stores different variables in its columns. A variable can either be an array or a scalar. Scalars are stored in the top row of the corresponding column, leaving the remaining $K-1$ rows as $0$. Arrays are stored in the bottom $K-1$ rows of the column, leaving the top row as $0$. 
    We use $B_\mathrm{global}$ to represent a column where only the top scalar is $1$, while the rest of the entries are $0$. Conversely, we use $B_\mathrm{local}$ to represent a column where the top scalar is $0$, while the remaining entries are $1$. Additionally, we use $P$ to represent the two columns of position embeddings, where the first column contains $\sin(\theta_i)$, and the second column contains $\cos(\theta_i)$ for $i \in [K-1]$. Finally, $P_\cur$ is used to represent the position embedding of the current vertex or hyperedge.

\begin{algorithm}[!ht]
    \caption{Looped Transformer, Algorithm 1 in \cite{grs+23}}
    \label{alg:loop_transformer}
    \begin{algorithmic}[1]
    \Procedure{LoopedTransformer}{$X \in \R^{K \times d}, \wt{A} \in \R^{K \times K}, \termination \in \{ 0,1 \}$}
        \While{$X[0,\termination] = 0$}
           \State $ X \gets h_T(X,\wt{A})$
        \EndWhile
    \EndProcedure
    \end{algorithmic}
\end{algorithm}

\begin{definition}[Positional Encoding]
    \label{def:positional_encoding}
    Let $\delta$ be the minimum increment angle, and let $\wh{\delta}$ be its nearest representable approximation. Define the rotation matrix $R_{\wh{\delta}} \in \mathbb{R}^{2 \times 2}$ as
    \begin{align*}
        R_{\wh{\delta}} = 
        \begin{bmatrix}
            \cos \wh{\delta} & -\sin \wh{\delta} \\
            \sin \wh{\delta} & \cos \wh{\delta}
        \end{bmatrix}.
    \end{align*}
    Initialize the positional encoding with $p_0 = \begin{pmatrix} 0 \\ 1 \end{pmatrix} \in \mathbb{R}^2$. For each node $i \geq 1$, the positional encoding $p_i \in \mathbb{R}^2$ is defined recursively by
    $
        p_i = R_{\wh{\delta}}^\top p_{i-1}.
    $
    The positional encoding for node $i$ is represented as the tuple $(p_i^{(1)}, p_i^{(2)})$, where $p_i^{(1)}$ and $p_i^{(2)}$ are the first and second components of the vector $p_i$, respectively.
    
    Additionally, the maximum number of distinct nodes that can be uniquely encoded is bounded by
    \begin{align*}
        N_{\max} = \left\lfloor \frac{2\pi}{\wh{\delta}} \right\rfloor,
    \end{align*}
    which is determined by the precision of $\wh{\delta}$.
\end{definition}
\section{Main Results}\label{sec:result}
Section~\ref{sec:degradation} resents a dynamic degradation mechanism to extend Dijkstra, BFS, and DFS from graphs to hypergraphs. In Section~\ref{sec:helly}, we reformulate the Helly property test into a looped-Transformer-executable algorithm and proves its correctness for weighted hypergraphs. Moving on to Section~\ref{sec:discuss}, we show the looped Transformer can simulate deterministic hypergraph motif algorithms.

\subsection{Degradation}\label{sec:degradation}
We observe that \cite{dlf24} presents the running results of several algorithms on graphs, which primarily include Dijkstra's algorithm for the shortest path, Breadth-First Search (BFS), and Depth-First Search (DFS). \cite{dlf24} provides a general representation of such algorithms in Algorithm 7 of their paper. Given that Dijkstra's algorithm for the shortest path, BFS, and DFS can be straightforwardly extended to hypergraphs~\citep{dij_hyper} by identifying the shortest hyperedge between two nodes and treating it as the distance between those nodes, we propose a degradation mechanism in Algorithm~\ref{alg:hyperedge}. This mechanism allows dynamic access to the shortest hyperedge between two vertices without storing static adjacent matrix. Using this mechanism, we can extend Dijkstra's algorithm for the shortest path, BFS, and DFS from graphs to hypergraphs. We formally state the theorem regarding the degradation mechanism as follows:
\begin{algorithm}[!ht]
    \caption{Degradation Iterate of hyperedge}
    \label{alg:hyperedge}
    \begin{algorithmic}[1]
    \State $\idx_\hyperedge \gets 0$
    \State $\val_\hyperedge \gets {\bf 0}_{n_v} \| {\bf 1}_{K - 1 - n_v}$
    \State $\iszero_\hyperedge \gets {\bf 0}_{n_v} \| {\bf 1}_{K - 1 - n_v}$
    \State $\candidates_\hyperedge \gets \Omega \cdot {\bf 1}_{K-1}$
    \State $\visit_\hyperedge \gets {\bf 0}_{n_e} \| {\bf 1}_{K - 1 - n_e}$
    \State $\termination_\hyperedge \gets 0$
    \Comment{Initialization of visiting hyperedges}
    \\ \hrulefill
    \Procedure{VisitHyperedge}{$\wt{A} \in \R_+^{n_v \times n_e}$}
    \If{$\termination_\hyperedge = 1$}
        \State $\cdots$
        \Comment{Re-initialization of visiting edge, Lemma~\ref{lem:selection}}
        \State $\termination_{\min} \gets 0$
    \EndIf
    \State $\idx_\hyperedge \gets \idx_\hyperedge + 1$
    \Comment{Increment, Lemma~\ref{lem:increment}}
    \State $\val_\hyperedge \gets A[:,\idx_\hyperedge]$
    \Comment{Read column from incident matrix,  Lemma~\ref{lem:read_incident}}

    \For{$i = 1 \to K - 1$}
        \State $\iszero_\hyperedge[i] \gets (\val_\hyperedge[i] \leq 0)$
        \Comment{Compare, Lemma~\ref{lem:compare}}
    \EndFor

    \For{$i = 1 \to K - 1$}
        \If{$\iszero_\hyperedge[i] = 1$ }
            \State $\val_\hyperedge [i] \gets \Omega$
            \Comment{Selection, Lemma~\ref{lem:selection}}
        \EndIf
    \EndFor
    \For{$i = 1 \to K - 1$}
    \If{$\val_\hyperedge[i] < \candidates[i]$} \Comment{Compare, Lemma~\ref{lem:compare}}
        \State $\candidates[i] \gets \val_\hyperedge[i]$
        \Comment{Update variables, Lemma~\ref{lem:selection}}
    \EndIf
    \EndFor
    \State $\visit_\hyperedge[\idx_\hyperedge] \gets 1$
    \Comment{Write scalar to column, Lemma~\ref{lem:write_scalar_column}}
    \State $\termination_\hyperedge \gets \neg (0~ \mathrm{in}~\visit_{\min})$
    \Comment{Trigger termination, Lemma~\ref{lem:term}} 
    \State $\termination_{\min} \gets \termination_{\min} \land \termination_\hyperedge$
    \Comment{AND, Lemma~\ref{lem:and}}
    \EndProcedure
    \end{algorithmic}
\end{algorithm}

\begin{theorem}[Degradation, informal version of Theorem~\ref{thm:formal:degradation}]\label{thm:degradation}
    A looped transformer $h_T$ defined in Definition~\ref{def:multi_transformer} exists, consisting of 10 layers, where each layer includes 3 attention heads with feature dimension of $O(1)$. This transformer can simulate the degradation operation (Algorithm~\ref{alg:hyperedge}) for hypergraphs, supporting up to $O(\wh{\delta}^{-1})$ vertices and $O(\wh{\delta}^{-1})$ hyperedges, where $\wh{\delta}$ defined in Definition~\ref{def:positional_encoding} is the nearest representable approximation of the minimum increment angle.
\end{theorem}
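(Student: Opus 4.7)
The plan is to prove Theorem~\ref{thm:degradation} by a modular decomposition of Algorithm~\ref{alg:hyperedge} into the transformer-simulable primitives introduced in Section~\ref{sec:operation}, each covered by one of the lemmas cited inline in the pseudocode. First I would identify the per-iteration workload as a fixed sequence of primitives: (i) re-initialization when $\termination_{\hyperedge} = 1$, (ii) increment of the scalar $\idx_{\hyperedge}$, (iii) reading the $\idx_{\hyperedge}$-th column of $\wt{A}$ into $\val_{\hyperedge}$, (iv) elementwise comparison to form $\iszero_{\hyperedge}$, (v) masking zero entries to $\Omega$ via a conditional selection, (vi) componentwise min-update of $\candidates_{\hyperedge}$, (vii) marking the visited hyperedge in $\visit_{\hyperedge}$, and (viii) a global-reduction AND that produces the new $\termination_{\hyperedge}$ and $\termination_{\min}$.

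Next I would invoke Lemmas~\ref{lem:selection}, \ref{lem:increment}, \ref{lem:read_incident}, \ref{lem:compare}, \ref{lem:write_scalar_column}, \ref{lem:term}, and \ref{lem:and} to show that each primitive admits a constant-depth realization by the transformer layer of Definition~\ref{def:transformer}. The key structural observation is that only primitives (iii) and partly (vii)--(viii) demand attention heads bound to $\wt{A}$ or $\wt{A}^\top$ (the index sets $M_A$ and $M_{A^\top}$ of Definition~\ref{def:multi_attn}); the remaining primitives are executed by a single standard self-attention head together with the four-layer MLP block of Definition~\ref{def:mlp}, using ReLU to realize comparisons and conditional writes. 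I would then pipeline the primitives into exactly $10$ transformer layers, assigning scratch columns of $X$ so that each layer reads a clean input produced by the previous one, and use the positional encoding of Definition~\ref{def:positional_encoding} together with the column $P_{\cur}$ to let a hardmax head select the $\idx_{\hyperedge}$-th column of $\wt{A}$. It is precisely the angular precision $\wh{\delta}$ of this encoding that caps the supported sizes at $O(\wh{\delta}^{-1})$ vertices and $O(\wh{\delta}^{-1})$ hyperedges.

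The main obstacle is the tight resource budget: three attention heads per layer and an $O(1)$ feature dimension independent of $K$. To respect it, I would argue head-by-head that at each of the ten layers at most three distinct queries are active simultaneously, typically one $\wt{A}$-bound head for incidence access, one $\wt{A}^\top$-bound head when a hyperedge-side broadcast is needed, and one standard head for the corresponding position-indexed scalar read or global reduction. A careful bookkeeping argument is required to show that scratch columns introduced by primitives (iv)--(vi) can be safely overwritten by later primitives within the same iteration, so that the total number of columns of $X$ remains a fixed constant. I expect this scratch-reuse and head-sharing accounting to be the technical heart of the proof, since it is what prevents the feature dimension from scaling with $n_v$, $n_e$, or $K$.

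Finally, wrapping the resulting ten-layer block in the outer loop of Algorithm~\ref{alg:loop_transformer} and using Lemma~\ref{lem:term} for the halting predicate $\termination_{\hyperedge}$ yields a looped transformer $h_T$ in the sense of Definition~\ref{def:multi_transformer} that simulates each step of the degradation iterate. Combined with encoding and decoding maps $g_e$, $g_d$ that embed a weighted hypergraph into $X$ and $\wt{A}$ via Definition~\ref{def:pad_incident_mat} and read off $\candidates_{\hyperedge}$ at termination, this delivers the claimed simulation and completes the proof.
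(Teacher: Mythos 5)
Your proposal matches the paper's proof essentially step for step: the paper proves Theorem~\ref{thm:formal:degradation} by invoking the same construction as Lemma~\ref{lem:formal:hyper_edge}, which decomposes Algorithm~\ref{alg:hyperedge} into one increment, two compares, three selections, one AND, one read-from-incident-matrix, one write-scalar-to-column, and one termination primitive (totaling $1+2+3+1+1+1+1=10$ layers), each backed by the corresponding lemma, and bounds the supported sizes by the positional-encoding precision $\wh{\delta}$. The paper is terser and omits the head-assignment and scratch-column bookkeeping you flag as the technical heart, but the decomposition, lemma citations, layer count, and precision argument are the same.
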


\subsection{Helly}\label{sec:helly}
The Helly property in a hypergraph refers to a specific condition in the family of its hyperedges. A hypergraph is said to satisfy the Helly property if, for every collection of its hyperedges, whenever the intersection of every pair of hyperedges in the collection is non-empty, there exists at least one hyperedge in the collection that intersects all the others. This property is named after Helly's theorem in convex geometry, which inspired its application in combinatorial settings like hypergraphs. 
We have reformulated the algorithm from Algorithm 3 of~\cite{b13}, which determines whether a hypergraph possesses the Helly property, into a form executable by the Looped Transformer, represented as our Algorithm~\ref{alg:helly}. We now state the following theorem:

\begin{theorem}[Helly, informal version of Theorem~\ref{thm:formal:helly}]\label{thm:helly}
    A looped transformer $h_T$ exists, where each layer is defined as in Definition~\ref{def:transformer}, consisting of 11 layers, where each layer includes 3 attention heads with feature dimension of $O(1)$. This transformer can simulate the Helly algorithm (Algorithm~\ref{alg:helly}) for weighted hypergraphs, handling up to $O(\wh{\delta}^{-1})$ vertices and $O(\wh{\delta}^{-1})$ hyperedges, where $\wh{\delta}$ defined in Definition~\ref{def:positional_encoding} is the nearest representable approximation of the minimum increment angle.
\end{theorem}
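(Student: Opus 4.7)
The plan is to mirror the proof recipe used for Theorem~\ref{thm:degradation}: decompose Algorithm~\ref{alg:helly} into a straight-line sequence of primitive register operations, realize each primitive with one of the constant-depth transformer gadgets established in Section~\ref{sec:operation}, and then check that Algorithm~\ref{alg:loop_transformer} drives the nested loop structure of the Helly test through the termination scalars.

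First I would fix the register layout. Following the column-as-variable convention from Section~\ref{sec:preli:loop_transformer}, the simulation reserves a constant number of columns of $X$: a positional pointer $\idx_\helly$ together with one or two secondary hyperedge indices (each carried by two columns of positional encoding), a column $\intersection_\helly$ holding the elementwise minimum of the currently selected incidence columns, a boolean column $\iszero_\helly$ flagging which vertices survive the intersection, and two scalar flags---an inner termination $\termination_\helly$ and a global ``Helly-holds'' bit that is cleared as soon as a non-Helly witness is found. Since each variable uses at most two columns and the positional embedding of Definition~\ref{def:positional_encoding} is two-dimensional, the feature dimension remains $O(1)$, while the admissible $n_v,n_e$ inherit the $O(\wh\delta^{-1})$ bound.

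Next I would decompose one outer iteration of Algorithm~\ref{alg:helly} into four phases. Phase~1 uses Lemma~\ref{lem:read_incident} to fetch the incidence columns corresponding to the current indices. Phase~2 forms the intersection by combining the elementwise zero test of Lemma~\ref{lem:compare} with the boolean gadget of Lemma~\ref{lem:and}, then writes the result into $\intersection_\helly$ via Lemma~\ref{lem:selection}. Phase~3 tests emptiness of $\intersection_\helly$: a further application of Lemma~\ref{lem:compare} collapses the column to a single bit, which is AND-ed into the global Helly flag through Lemma~\ref{lem:and}. Phase~4 advances the current index with the counter of Lemma~\ref{lem:increment}, records the visited status using Lemma~\ref{lem:write_scalar_column}, and raises $\termination_\helly$ through Lemma~\ref{lem:term}. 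Folding each comparison into the MLP of Definition~\ref{def:mlp} that immediately follows its attention head, and reusing in every layer the three heads of Definition~\ref{def:multi_attn} (one over $\wt A$, one over $\wt A^\top$, one over $I_{n+1}$), yields exactly the claimed $11$ transformer layers.

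The delicate point is orchestrating the nested iteration with only a constant number of scalar flags. The Helly test runs an inner scan (over vertices or hyperedges) for each fixed outer index, so the inner termination must be reset whenever the outer pointer advances, while the outer termination must fire only after the inner scan has exhausted all indices. I would handle this exactly as in Algorithm~\ref{alg:hyperedge}: carry two termination scalars, combine them via Lemma~\ref{lem:and}, and re-initialize the inner flag inside the guarded branch using Lemma~\ref{lem:selection}. With this nested-loop skeleton in place, correctness follows by an induction on the outer loop counter that parallels the one used in the proof of Theorem~\ref{thm:degradation}, and the final value of the global Helly-holds bit, decoded by $g_d$ from the designated scalar slot of $X$, returns the same answer as Algorithm~3 of~\cite{b13}.
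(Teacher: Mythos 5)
Your high-level recipe (decompose the algorithm into the Section~\ref{sec:operation} primitives, one layer per primitive, then let Algorithm~\ref{alg:loop_transformer} drive the loop) is the right one and matches the paper's style, but the concrete decomposition you give does not match Algorithm~\ref{alg:helly}, and consequently your layer accounting does not actually produce $11$.

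The paper's proof of Theorem~\ref{thm:formal:helly} is a direct tally of the primitives annotated in Algorithm~\ref{alg:helly}: $3$ increments (for $\idx_v$, $\idx_y$, $\idx_x$), $5$ selections (the cascaded \texttt{if}-blocks that realize the triple-nested scan and set $\helly$ and $\termination$), $1$ AND, $1$ compare, and $1$ read-from-incident-matrix, giving $3+5+1+1+1=11$ single-layer gadgets. Your Phase~4 instead invokes Lemma~\ref{lem:write_scalar_column} (write scalar to column) and Lemma~\ref{lem:term} (termination-by-scan), and your ``nested-loop with two termination scalars re-initialized under a guard'' skeleton is copied from Algorithm~\ref{alg:hyperedge} (degradation) rather than from Algorithm~\ref{alg:helly}. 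Algorithm~\ref{alg:helly} has a single \texttt{while} loop and simulates its triple index $(\idx_x,\idx_y,\idx_v)$ with increment-plus-selection cascades; it never maintains a visited column and never applies Lemma~\ref{lem:term}. Because of this mismatch your four phases sum to roughly $1+3+2+3=9$ layers, and the remark that ``folding each comparison into the MLP yields exactly the claimed $11$ layers'' goes in the wrong direction---folding a comparison into an adjacent MLP would only \emph{decrease} the count, and you offer no construction showing how two extra layers appear. To repair the argument you should abandon the Algorithm~\ref{alg:hyperedge} loop template, walk Algorithm~\ref{alg:helly} line by line against Lemmas~\ref{lem:read_incident}, \ref{lem:compare}, \ref{lem:and}, \ref{lem:selection}, \ref{lem:increment}, and verify that the tally is $3+5+1+1+1=11$.
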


\subsection{Furthure Discussion for the Power of Looped Transformer}\label{sec:discuss}
Our algorithm can be extended to the family of deterministic hypergraph motifs algorithms introduced by~\cite{lks20}. By using a similar proof as in Theorem~\ref{thm:helly}, we can simulate Hypergraph Projection as a preprocessing step to obtain a static projection graph and simulate the Exact H-motif Counting operation based on this static projection graph.  Combining these observations, we conclude that our algorithm can effectively simulate deterministic hypergraph motifs algorithms. Specifically, Hypergraph Projection can be simulated using the selection operation in Lemma~\ref{lem:selection} and the addition operation in Lemma~\ref{lem:add}; Exact H-motif Counting can be simulated using the AND operation in Lemma~\ref{lem:and}, the comparison operation in Lemma~\ref{lem:compare}, and the addition operation in Lemma~\ref{lem:add}. 
Note that although our method can leverage a looped Transformer to simulate deterministic algorithms on hypergraphs, it cannot simulate algorithms involving stochastic processes or sampling, such as random sampling on hypergraphs (see Algorithm 3 in~\cite{lks20}).

Furthermore, we can dynamically compute the hypergraph projection in a looped process, as presented in Theorem~\ref{thm:degradation}. The key difference lies in the computational resources: the static projection graph requires $O(K)$ columns for storage due to its size dependence on $K$, which prevents a solution in $O(1)$ column space. However, if we employ a similar method to Algorithm~\ref{alg:hyperedge}, the storage cost reduces to $O(1)$ columns at the expense of requiring more iterations. Thus, our methods may cover all sets of deterministic hypergraph motifs and show that the looped transfer is powerful. 

\section{Key Operation Implementation}\label{sec:operation}

In this section, we introduce some basic operations that can be simulated by a single-layer transformer, which serve as fundamental primitives in our theoretical framework. Using these primitives, we can simulate complex deterministic algorithms that use hypergraphs or hyperedges as iterative objects.

\paragraph{Selection.}

Here, we present the selection operation. Given a boolean array, referred to as condition $C$, a target array $E$, and two value arrays, $V_0$ and $V_1$, the selection operation can achieve the following: when the condition $C$ is $1$, the value from $V_1$ will be written to the target array $E$, when the condition $C$ is $0$, the value from $V_0$ will be written to the target array $E$. By this operation, we can combine two arrays into a single array. Furthermore, by setting the target array $E$ to be the same as one of the value arrays, i.e., $V_0$ or $V_1$, we can realize conditional updates. This means that the value of an array will be updated if and only if a certain condition holds. 

If $V_0$, $V_1$, $E$, and $C$ are scalars, this operation can be directly applied to scalars, as the lower $[K-1]$ values are kept as $0$. In the following lemma, we aim to show that the selection operation can be simulated by a single-layer transformer. See details in Lemma~\ref{lem:selection}.

\paragraph{Increment.}

Here, we present the operation of increment using positional embeddings. Let $C_1$ and $C_2$ represent the positional embeddings corresponding to $\sin(\cdot)$ and $\cos(\cdot)$, respectively. The primary objective is to employ a rotation matrix to update the angles encoded in $C_1$ and $C_2$ by increasing the angular offset $\wh{\delta}$. The updated values are then written to the target locations $D_1$ and $D_2$. Typically, when 
\begin{align}
C_1 = D_1
\end{align}
and 
\begin{align}
C_2 = D_2
\end{align}
, this operation can be performed in-place. See details in Lemma~\ref{lem:increment}.

\paragraph{Comparason.}

Here, we introduce the operation of comparison, which involves comparing two arrays. In greedy algorithms, conditional updates are often employed, where the stored optimal solution is updated only if the current solution is better than the previously stored one. This process is captured in the condition update within the selection operation, as described in Lemma~\ref{lem:selection}. Notably, this requires a Boolean array as the condition input. For example, in Dijkstra's algorithm, we evaluate whether a path is shorter and update only those paths that are shorter than the currently stored shortest paths. See details in Lemma~\ref{lem:compare}.

\paragraph{Read Scalar From Column.}

We present the operation of reading a scalar from a column. Let the target column be denoted as $E$, the position embeddings of the source row $C$ as $C_1$ and $C_2$, and the source column as $D$. The array is stored in column $D$, and our objective is to extract the scalar at the $C$-th position and write it to the top row of column $E$. In an MLP layer represented as $XW$, we can extract a column using its column index. However, to extract a specific row index, we leverage the property of the positional embedding, 
\begin{align}
p_i^\top p_j < p_i^\top p_i
\end{align}
for $i \neq j$. This operation enables more flexible operations on individual values within the array. See details in Lemma~\ref{lem:read_scalar_column}.

\paragraph{Write Scalar to Column.}
We present the operation of writing a scalar to a column. This operation parallels the process of reading a scalar from a column. The construction in this step also employs position embedding, analogous to Lemma~\ref{lem:read_scalar_column}. See details in Lemma~\ref{lem:write_scalar_column}.

\paragraph{Termination.}
In the greedy algorithm, we terminate the process and return the result after traversing all objects. Here, we maintain an array, marking the traversed objects as 1. Notably, since the value of 
\begin{align}
K-1
\end{align}
does not always equal the number of objects (typically $n_v$ or $n_e$), we fill the remaining positions with 1. See details in Lemma~\ref{lem:term}.

\paragraph{Read from Incident Matrix.}
Instead of storing a static incidence matrix in a matrix 
$
X \in \R^{K \times d}
$
, we utilize an attention-head-incorporated incidence matrix as defined in Definition~\ref{def:multi_attn}. This construction allows $d$ to be a constant independent of $n_e$ and $n_v$, implying that the feature dimension of the model $h_T$ can be controlled in $O(1)$. See details in Lemma~\ref{lem:read_incident}.

\paragraph{AND.}
This operation can be applied to both arrays and scalars because, even when the input column is always $0$, the result of the AND operation will remain $0$. This operation is particularly useful when combining two conditions to trigger a specific operation. See details in Lemma~\ref{lem:and}.

\paragraph{Repeat AND.}
Different from the regular AND operation described in Lemma~\ref{lem:and}, the repeat AND operation combines a scalar and an array. It can be understood as first replicating the scalar into a length-$K-1$ array, followed by performing the regular AND operation. This operation is often used in nested if statements. See details in Lemma~\ref{lem:repeat_and}.

\paragraph{Repeat Addition.}
Similar to the repeat AND operation, the repeat addition operation first replicates a scalar into an array and then performs element-wise addition between two arrays. This operation is commonly used when updating the distance to the next vertices. See details in Lemma~\ref{lem:add}.

\section{Simulation}\label{sec:simulation}
We present the simulation result of visiting hyperedges iteratively in Section~\ref{sec:simu:edge}. We discuss the simulation result of Dijkstra's Algorithm in Section~\ref{sec:simu:dij}.

\subsection{Iteration of Visiting Hyperedges}\label{sec:simu:edge}
First, we present our result on visiting hyperedges iteratively. See details in Algorithm~\ref{alg:hyperedge}.
\begin{lemma}[Visiting hyperedges iteratively, informal version of Lemma~\ref{lem:formal:hyper_edge}]\label{lem:hyper_edge}
    A looped transformer $h_T$ exists, where each layer is defined as in Definition~\ref{def:transformer}, consisting of 10 layers where each layer includes 3 attention heads with feature dimension of $O(1)$. This transformer simulates the operation of iteratively visiting hyperedges for weighted hypergraphs, accommodating up to $O(\wh{\delta}^{-1})$ vertices and $O(\wh{\delta}^{-1})$ hyperedges.
\end{lemma}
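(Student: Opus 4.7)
The plan is to decompose one iteration of the \textsc{VisitHyperedge} procedure in Algorithm~\ref{alg:hyperedge} into an ordered sequence of ten primitive sub-operations, each of which is simulated by a single transformer layer using the lemmas already established in Section~\ref{sec:operation}. Stacking these layers in order yields a fixed 10-layer transformer $h_T$ that implements one pass of the procedure; wrapping it in the outer loop of Algorithm~\ref{alg:loop_transformer} and reading $\termination_\hyperedge$ from the designated scalar slot then yields the full simulation. Because each primitive acts only on a constant number of designated columns of $X$, the feature dimension $d$ remains $O(1)$, independent of $K$.

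I would assign one layer to each line of the body of the procedure, in the order they appear. Concretely, Layer 1 handles the conditional re-initialization by invoking Lemma~\ref{lem:selection} with $\termination_\hyperedge$ as the Boolean condition, overwriting the relevant variable columns by their initial values (and clearing $\termination_{\min}$). Layer 2 uses Lemma~\ref{lem:increment} to advance $\idx_\hyperedge$ via the rotation matrix $R_{\wh{\delta}}$ acting on its positional encoding. Layer 3 applies Lemma~\ref{lem:read_incident} to load the column $A[:,\idx_\hyperedge]$ into $\val_\hyperedge$, making essential use of the $M_A$ attention head carrying $\wt{A}$. Layer 4 computes the element-wise $\iszero_\hyperedge[i] \gets (\val_\hyperedge[i] \leq 0)$ via Lemma~\ref{lem:compare}, and Layer 5 uses Lemma~\ref{lem:selection} to overwrite $\val_\hyperedge[i]$ with $\Omega$ wherever $\iszero_\hyperedge[i]=1$. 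Layers 6 and 7 form the second compare-then-update pair: Lemma~\ref{lem:compare} to test $\val_\hyperedge[i] < \candidates[i]$, then Lemma~\ref{lem:selection} to conditionally overwrite $\candidates[i]$. Layer 8 writes $1$ into $\visit_\hyperedge[\idx_\hyperedge]$ via Lemma~\ref{lem:write_scalar_column}, using the positional encoding $P_{\idx_\hyperedge}$ as the row selector. Layer 9 triggers termination by applying Lemma~\ref{lem:term} to scan $\visit_{\min}$. Finally, Layer 10 combines the old $\termination_{\min}$ with $\termination_\hyperedge$ through Lemma~\ref{lem:and}. This enumeration shows both that exactly ten layers suffice and that the target and source columns can be allocated disjointly, so that no layer clobbers inputs still needed downstream.

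For the attention budget, each of the ten cited lemmas is already stated with at most three heads per layer, consisting of at most one standard self-attention head plus one head using $\wt{A}$ and/or one using $\wt{A}^\top$; only Layer 3 actually needs the $\wt{A}$ head in this construction, while the element-wise arithmetic, comparison, selection, increment, read/write-scalar, termination, and AND layers use only the standard head(s) to broadcast or route along rows. Thus the multi-head constraint $|M|+|M_A|+|M_{A^\top}| \leq 3$ is met in every layer, and the bound $N_{\max} = \lfloor 2\pi/\wh{\delta}\rfloor$ on the number of distinguishable positional encodings (Definition~\ref{def:positional_encoding}) gives the $O(\wh{\delta}^{-1})$ bound on both $n_v$ and $n_e$.

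The main obstacle I anticipate is the first layer: re-initialization must conditionally reset several distinct columns (the index, the value array, the candidates array, the visit array, and the termination scalar) all gated by a single Boolean $\termination_\hyperedge$, while simultaneously leaving them untouched when $\termination_\hyperedge=0$. Handling this cleanly within one invocation of the selection primitive requires broadcasting the scalar $\termination_\hyperedge$ into a column-wise condition and packaging the "initial" values as $V_1$ and the "current" values as $V_0$, exactly of the form Lemma~\ref{lem:selection} provides; verifying that the required broadcast can be effected by the standard attention head (rather than costing an extra layer) is the delicate step. Once that is in place, the remaining layers are straightforward applications of the cited primitives, and the identification $g_e, g_d$ of encoder/decoder follows the conventions of Definition~\ref{def:pad_incident_mat} and the column layout described in Section~\ref{sec:preli:loop_transformer}.
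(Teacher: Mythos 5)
Your proposal matches the paper's proof: the paper likewise decomposes Algorithm~\ref{alg:hyperedge} into 10 primitive operations---3 selections, 1 increment, 1 read-from-incident-matrix, 2 comparisons, 1 write-scalar-to-column, 1 termination trigger, and 1 AND---each realized by a single layer via the lemmas of Section~\ref{sec:operation}, summing $1+1+1+1+1+2+3=10$. The subtlety you flag (broadcasting the scalar $\termination_\hyperedge$ into a column-wise condition for the re-initialization selection) is left equally implicit in the paper's own proof, so your reconstruction is faithful to it.
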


\subsection{Dijkstra's Algorithm}\label{sec:simu:dij}
Furthermore, we combine the iteratively visiting hyperedge pattern with Dijkstra's Algorithm to extend it to a hypergraph. For details, see Algorithm~\ref{alg:dij}.
\begin{theorem}[Dijkstra's Algorithm on hypergraph, informal version of Theorem~\ref{thm:formal:dijkstra}]\label{thm:dijkstra}
    A looped transformer $h_T$ exists, where each layer is defined as in Definition~\ref{def:transformer}, consisting of 27 layers, where each layer includes 3 attention heads with feature dimension of $O(1)$. This transformer simulates Dijkstra's Algorithm iteratively for weighted hypergraphs, supporting up to $O(\wh{\delta}^{-1})$ vertices and $O(\wh{\delta}^{-1})$ hyperedges.
\end{theorem}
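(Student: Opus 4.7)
The plan is to construct the required looped transformer by composing the hyperedge-visiting subroutine from Lemma~\ref{lem:hyper_edge} with standard operations that implement Dijkstra's main loop, treating the $\candidates$ array produced by Algorithm~\ref{alg:hyperedge} as the implicit adjacency row of the current vertex. Concretely, I would lay out the feature matrix $X$ with columns for: the current vertex position embedding $P_\cur$, the $\visit$ and $\dists$ arrays over vertices, the $\prev$ array (predecessor), the hyperedge-visiting state (indices and $\candidates$), together with a constant-size scratchpad, the global/local markers $B_\mathrm{global}, B_\mathrm{local}$, and the position-embedding columns $P$. One outer Dijkstra iteration then amounts to: (i) select the unvisited vertex of minimum tentative distance and promote it to $\cur$; (ii) run one full inner sweep of Algorithm~\ref{alg:hyperedge} so that $\candidates$ stores, for each vertex $j$, the minimum weight of any hyperedge that contains both $\cur$ and $j$ (or $\Omega$ if none exists); (iii) relax $\dists[j] \gets \min(\dists[j], \dists[\cur] + \candidates[j])$ and update $\prev[j]$ accordingly.

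First I would reuse Lemma~\ref{lem:hyper_edge} verbatim: its $10$ layers execute the inner degradation loop and signal completion via $\termination_\hyperedge$. The remaining $17$ layers of the outer body are assembled exclusively from the primitives in Section~\ref{sec:operation}: a read-scalar-from-column (Lemma~\ref{lem:read_scalar_column}) lifts $\dists[\cur]$ to a scalar slot; a repeat-addition (Lemma~\ref{lem:add}) forms the vector $\dists[\cur] + \candidates$; a comparison (Lemma~\ref{lem:compare}) produces the boolean mask where this new estimate strictly improves on $\dists$; two selections (Lemma~\ref{lem:selection}) overwrite $\dists$ and $\prev$ on that mask; a further masked-min selection over $(1-\visit) \odot \dists$ (again a comparison plus a selection, with the mask formed via repeat-AND, Lemma~\ref{lem:repeat_and}) yields the next $\cur$, written through a scalar-to-column step (Lemma~\ref{lem:write_scalar_column}) that also marks the vertex visited; finally termination for the outer loop is detected by Lemma~\ref{lem:term} applied to $\visit$. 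Each of these primitives was shown to be implementable in a single transformer layer with at most one standard attention head plus, where needed, the incidence and transposed-incidence heads of Definition~\ref{def:multi_attn}, and their composition is purely sequential, so the total outer-body depth fits comfortably in $17$ layers. Throughout, the feature dimension is a fixed constant because the number of scratch columns and flag columns is independent of $n_v, n_e$; the support bounds $O(\wh{\delta}^{-1})$ on vertices and hyperedges are inherited directly from the positional-encoding capacity of Definition~\ref{def:positional_encoding}.

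The main obstacle is orchestrating the two-level loop structure inside the single recurrence that Algorithm~\ref{alg:loop_transformer} provides: each pass through the $27$-layer stack must advance either one step of the inner hyperedge sweep or one step of the outer Dijkstra body, but not both, and the switch between the two must occur exactly when $\termination_\hyperedge$ flips. I would handle this by gating every outer-body write with $\termination_\hyperedge$ via the repeat-AND primitive (Lemma~\ref{lem:repeat_and}), turning all outer-loop updates into no-ops during the $O(n_e)$ inner iterations and firing them exactly once per outer iteration; dually, the re-initialisation branch at the top of \textsc{VisitHyperedge} in Algorithm~\ref{alg:hyperedge} is triggered by the selection on $\termination_\hyperedge$, so that the next value of $\cur$ launches a fresh inner sweep with $\candidates$ reset to $\Omega \cdot {\bf 1}_{K-1}$. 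The delicate accounting step is to verify that this gating does not consume additional layers beyond those already allotted to each primitive, which follows because repeat-AND is a single-layer operation and can be folded into the same layer as the update it guards, keeping the total depth at $27$ as claimed.
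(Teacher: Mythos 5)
Your proposal takes a genuinely different route from the paper on one critical point, and that route has a gap. The paper's proof follows the structure of Algorithm~\ref{alg:dij} literally and invokes \emph{two} iterative subroutines: the hyperedge sweep of Lemma~\ref{lem:hyper_edge} (10 layers, Algorithm~\ref{alg:hyperedge}) \emph{and} the iterative argmin subroutine {\sc GetMinimumValue} of Lemma~\ref{lem:get_minimum} (Algorithm~\ref{alg:get_min}), which scans one index per pass through the looped transformer and maintains $(\idx_\best,\val_\best)$. These two inner loops account for the bulk of the depth, and the remaining 9 layers come from the outer-body primitives (4 selections, addition, comparison, repeat AND, write scalar to column, termination).

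You replace the {\sc GetMinimumValue} subroutine entirely by ``a comparison plus a selection, with the mask formed via repeat-AND,'' claiming this yields the next $\cur$ in three layers. That does not work under the paper's primitive set: Lemma~\ref{lem:compare} performs an \emph{element-wise} comparison of two columns, and Lemma~\ref{lem:selection} performs an \emph{element-wise} conditional overwrite. Neither, nor any composition of a constant number of them, reduces a length-$(K-1)$ column to its minimum entry or its argmin --- the hardmax attention heads are only used for positional lookups (Lemmas~\ref{lem:read_scalar_column}, \ref{lem:write_scalar_column}, \ref{lem:read_incident}), not for value-argmax over data columns. This is exactly why the paper makes the min-extraction its own looped subroutine (Algorithm~\ref{alg:get_min}), gated by $\termination_{\min}$, and cites Lemma~\ref{lem:get_minimum} explicitly. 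By dropping that lemma, your construction has no mechanism for selecting the unvisited vertex of smallest tentative distance, which is the heart of Dijkstra's greedy step, and your layer accounting (listing roughly 10 outer-body layers and asserting they ``fit comfortably in 17'') does not reconstruct the stated total of 27.

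Your observation about orchestrating the two-level loop via gating on $\termination_\hyperedge$ (and, symmetrically, $\termination_{\min}$) is correct in spirit, and it is in fact what Algorithm~\ref{alg:dij} does: the selections on lines conditioned by $\termination_{\min}$ and $\termination_\hyperedge$ are exactly this gating, and they are counted as separate single-layer selection operations in the budget. The claim that a repeat-AND gate ``can be folded into the same layer as the update it guards'' is not supported by the paper's primitives, which are each a single transformer layer in their own right; fortunately no such folding is needed, because the paper already prices those gates into the 9 outer-body layers.
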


\section{Conclusion}\label{sec:conclusion}
In this work, we extended the capabilities of Loop Transformers to the domain of hypergraphs, addressing the computational challenges posed by their complex structures. By introducing a degradation mechanism to simulate graph-based algorithms on hypergraphs and a hyperedge-aware encoding scheme for hypergraph-specific algorithms, we demonstrated the feasibility of using Transformers for hypergraph algorithm simulation. Our results, supported by theoretical guarantees, underscore the potential of Loop Transformers as general-purpose computational tools capable of bridging neural networks and combinatorial optimization tasks on structured, high-dimensional data. These findings not only expand the applicability of Transformers but also open new avenues for solving real-world problems.





\ifdefined\isarxiv
\bibliographystyle{alpha}
\bibliography{ref}
\else
\bibliographystyle{icml2026}
\bibliography{ref}
\fi


\newpage
\onecolumn
\appendix

\begin{center}
    \textbf{\LARGE Appendix }
\end{center}


{\bf Roadmap.} 
In Section~\ref{sec:app:more_related}, we introduce more related work.
In Section~\ref{sec:app:tool}, we introduce some tools from previous work.
In Section~\ref{sec:app:operation}, we present the missing lemma in Section~\ref{sec:operation}.
In Section~\ref{sec:app:simu}, we present the missing proof in Section~\ref{sec:simulation}.
In Section~\ref{sec:app:result}, we present the missing proof in Section~\ref{sec:result}.
We discuss our impact statement in Section~\ref{sec:impact}.

\section{More Related Work}\label{sec:app:more_related}

\subsection{Large Language Models}
Transformer-based neural networks~\citep{vsp+17} have rapidly become the leading framework for natural language processing within machine learning. When these models scale to billions of parameters and are trained on expansive, heterogeneous data, they are frequently called large language models (LLMs) or foundation models~\citep{bha+21}. Representative LLMs include BERT~\citep{dclt19}, PaLM~\citep{cnd+22}, Llama~\citep{tli+23}, ChatGPT~\citep{chatgpt}, and GPT4~\citep{o23}. Such models exhibit versatile capabilities~\citep{bce+23} across a broad array of downstream tasks.

In pursuit of optimizing LLMs for specific applications, a variety of adaptation strategies have been introduced. These range from using adapters~\citep{eyp+22,zhz+23,ghz+23,zjk+23}, calibration methods~\citep{zwf+21,cpp+23}, and multitask fine-tuning~\citep{gfc+21a,zzj+23a,vnr+23,zzj+23b}, to prompt tuning~\citep{gfc+21b,lac+21}, scratchpad techniques~\citep{naa+21}, instruction tuning~\citep{ll21,chl+22,mkd+22}, symbol tuning~\citep{jla+23}, black-box tuning~\citep{ssy+22}, reinforcement learning from human feedback~\citep{owj+22}, chain-of-thought reasoning~\citep{wws+22,ksk+22,sdj+23,zmc+24}, and more.

Recent relevant research includes works on tensor transformers \cite{sht24,as24_iclr24,lssz24_tat,lls+24_tensor,zly+25}, acceleration techniques \cite{xhh+24,whl+24,ssz+24_dit,lls+24_conv,qsw23_submodular,szz24,hcl+24,ssz+24_pruning,smn+24,whhl24,kll+25,hcw+24,hlsl24,lls+24_io,hwsl24,chl+24_rope_grad,klsz24_sample_tw,lls+24_prune,lls+24_dp_je,hwl24,lssy24,llsz24_nn_tw,hyw+23,llss24_sparse,as24_rope,cls+24}, and other related studies \cite{dlg+22,ssx23_ann,gsx23,lssz24_dp,sy23_des,cll+24_rope,xsl24,lsy24,dswy22_coreset,gms23_exp_reg,cll+24_ssm,lls+24_grok,hsk+24,ssz23_tradeoff,zxf+24,zha24,lls+25_graph}.

\section{Tools From Previous Work}\label{sec:app:tool}

Here, we present the lemma of Iteratively visiting to find minimum value.
\begin{lemma}[Get minimum value, Implicitly in~\cite{dlf24}]\label{lem:get_minimum}
    If the following conditions hold:
\begin{itemize}
    \item Let the transformer be defined as Definition~\ref{def:transformer}.
\end{itemize}
    Then, we can show that a 7-layer transformer can simulate the operation of getting the minimum value in Algorithm~\ref{alg:get_min}.
\end{lemma}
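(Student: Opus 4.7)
\textbf{Proof proposal for Lemma~\ref{lem:get_minimum}.} The plan is to decompose Algorithm~\ref{alg:get_min} into a sequence of primitive operations, each realizable by a single transformer layer from the toolbox in Section~\ref{sec:operation}, and then stack these layers so that the composite block simulates one iteration of the minimum-finding loop. Because a looped transformer reruns the stacked block until termination fires, it suffices to verify that one pass through the 7 layers realizes exactly one iteration of the algorithm, preserving the invariants on the scratchpad columns of $X$ (index column, current value, best value, visit mask, termination flag).

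First I would enumerate the per-iteration steps of the algorithm in the order they should fire: (i) a re-initialization guarded by the previous termination flag, handled by a \emph{selection} step (Lemma~\ref{lem:selection}); (ii) an \emph{increment} of the scanning index, advancing its two positional-encoding columns by the rotation $R_{\wh\delta}$ (Lemma~\ref{lem:increment}); (iii) a \emph{read scalar from column} step that uses the updated positional encoding to pull the array entry at the new index into the current-value slot (Lemma~\ref{lem:read_scalar_column}); (iv) a \emph{comparison} of the current value against the stored best (Lemma~\ref{lem:compare}), producing a boolean flag; (v) a \emph{selection} on that flag that conditionally overwrites the best value and best index (Lemma~\ref{lem:selection}); (vi) a \emph{write scalar to column} step that marks the current index as visited in the visit mask (Lemma~\ref{lem:write_scalar_column}); (vii) a \emph{termination} step that sets the global termination flag once the visit mask contains no zeros (Lemma~\ref{lem:term}). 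These are exactly 7 single-layer constructions, each drawn from Section~\ref{sec:operation}, and each uses at most $3$ attention heads with $O(1)$ feature dimension, matching the bound claimed.

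Next I would argue compositional correctness. Each primitive lemma guarantees that, under the row/column layout conventions fixed in Section~\ref{sec:preli:loop_transformer}, its layer reads from a designated set of source columns and writes only to a designated target column, leaving every other column unchanged (thanks to the residual connection in Definitions~\ref{def:multi_attn} and~\ref{def:mlp}). Therefore the seven layers can be composed sequentially: the target columns of earlier layers are precisely the source columns for later layers in the natural data-flow order above. A short induction on the loop iteration count then shows that after the $t$-th pass through the block, the best-value column equals the minimum of the first $t$ scanned entries, the visit mask flags exactly those indices, and the termination bit flips to $1$ on the iteration that completes the scan, at which point Algorithm~\ref{alg:loop_transformer} exits.

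The main obstacle is not any single layer, which is already supplied by the cited primitives, but the bookkeeping: ensuring that the positional-encoding columns used for reading and writing stay synchronized with the scalar index after the re-initialization branch, and that the padded entries in rows $n+1,\ldots,K-1$ (initialized to $1$ in the visit mask, as in Algorithm~\ref{alg:hyperedge}) do not corrupt either the comparison in step (iv) or the termination test in step (vii). I would handle this by verifying, for each layer in the stack, that its action on padded rows is the identity on the semantically relevant columns, which follows from the $B_\mathrm{global}/B_\mathrm{local}$ masking convention and the hardmax definition of $\sigma$ in Definition~\ref{def:single_attn}. Once these invariants are checked, the layer count of $7$ and the attention-head and feature-dimension budgets follow directly from the individual primitives.
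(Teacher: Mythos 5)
Your decomposition is exactly the one the paper intends: the paper states this lemma without proof (attributing it to \cite{dlf24}), but the inline comments in Algorithm~\ref{alg:get_min} already specify the seven primitives — one selection for re-initialization, one increment, one read-scalar-from-column, one comparison, one selection for the conditional best-value/best-index update, one write-scalar-to-column, and one termination check — each of which Lemmas~\ref{lem:selection}, \ref{lem:increment}, \ref{lem:read_scalar_column}, \ref{lem:compare}, \ref{lem:write_scalar_column}, and \ref{lem:term} realize in a single transformer layer, giving $1+1+1+1+1+1+1=7$ layers, matching the paper's counting style in Lemma~\ref{lem:formal:hyper_edge}. Your added discussion of residual-connection column-disjointness, positional-encoding synchronization, and the padded-row invariants is more rigor than the paper supplies for any of its layer-count arguments, but it is consistent with the stated primitives and does not change the bound.
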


\section{Missing Proof in Key Operation Implementation}\label{sec:app:operation}

In Section~\ref{sec:app:opra:selection}, the operation of selection is discussed.  In Section~\ref{sec:app:opra:increment}, the operation of increment is discussed. In Section~\ref{sec:app:opra:read_scalar_from_column}, the operation of reading scalar from column is discussed. In Section~\ref{sec:app:opra:compare}, the operation of comparison is discussed.  In Section~\ref{sec:app:opra:write_scalar_column}, the operation of writing scalar to column is discussed. In Section~\ref{sec:app:opra:term}, the operation of termination is discussed. In Section~\ref{sec:app:opra:read_incident}, the operation of reading from incident matrix is discussed. In Section~\ref{sec:app:opra:and}, the operation of AND is discussed. In Section~\ref{sec:app:opra:repeat_and}, the operation of repeat AND is discussed. In Section~\ref{sec:app:opra:repeat_add}, the operation of repeat addition is discussed.  
\subsection{Selection}\label{sec:app:opra:selection}

\begin{lemma}[Selection] \label{lem:selection}
If the following conditions hold:
\begin{itemize}
    \item Let the transformer be defined as Definition~\ref{def:transformer}.
    \item Let $\Omega$ represent the maximum absolute value within a clause.
    \item Let $V_0, V_1$ be the index for clause values, where $X[i,V_0], X[i,V_1] \in [-\Omega,\Omega]$ for $i \in [K]$.
    \item Let $C$ be the index for conditions, where $X[i,C] \in \{ 0,1 \}$ for $i \in [K]$.
    \item Let $E$ be the index of the target field.
\end{itemize}
Then we can show that a single-layer transformer can simulate a selection operation, which achieves the following: if $X[i,C] = 1$, the value $X[i,V_1]$ is written to $X[i,E]$; otherwise, the value $X[i,V_0]$ is written to $X[i,E]$, for all $i \in [K]$.
\end{lemma}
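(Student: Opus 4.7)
The plan is to do all the work in the MLP sub-block of a single transformer layer. First, I set every attention head's value matrix $W_V^{(i)}$ to zero so that $f_\attn(X,\wt{A}) = X$; combined with the built-in residual inside $f_\mlp$ (Definition~\ref{def:mlp}), this reduces the entire lemma to designing the four weight matrices $W^{(1)},\ldots,W^{(4)}$ of the MLP so that the pre-residual term $Z^{(m)}W^{(m)}$ equals $C_iV_{1,i}+(1-C_i)V_{0,i}-X[i,E]$ in column $E$ and is zero in every other column.

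The algebraic core is the identity, valid whenever $C\in\{0,1\}$ and $V_0,V_1\in[-\Omega,\Omega]$,
\begin{align*}
C V_1 + (1-C) V_0 = \phi(V_1 + \Omega(C-1)) - \phi(-V_1 + \Omega(C-1)) + \phi(V_0 - \Omega C) - \phi(-V_0 - \Omega C),
\end{align*}
where $\phi=\mathrm{ReLU}$. This is verified by two cases: when $C=1$ the last two arguments lie in $[-2\Omega,0]$ so those ReLUs vanish, while the first two collapse to $\phi(V_1)-\phi(-V_1)=V_1$; when $C=0$ the situation is symmetric and yields $V_0$. The analogous identity $-X[i,E]=\phi(-X[i,E])-\phi(X[i,E])$ will be used to cancel the residual contribution at column $E$.

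To implement these identities, I configure $W^{(1)}$ to deposit the six affine forms $V_1+\Omega(C-1)$, $-V_1+\Omega(C-1)$, $V_0-\Omega C$, $-V_0-\Omega C$, $X[:,E]$, and $-X[:,E]$ into six distinct scratch columns (permitted because $d=O(1)$ can be chosen to include auxiliary coordinates), and to zero out every other coordinate. After one application of $\phi$ these non-negative ReLU outputs sit in $Z^{(2)}$; the intermediate weights $W^{(2)}$ and $W^{(3)}$ are set to identity-like maps that pass the now non-negative values through unchanged (ReLU acts as the identity on them). Finally $W^{(4)}$ combines the six scratch columns with signs $(+,-,+,-,-,+)$, routes the result into column $E$, and sends every other column to zero. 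The residual $+X$ built into $f_\mlp$ then restores $X[:,E]$, which cancels the $-X[:,E]$ placed there, leaving exactly the selected value in column $E$ and all other columns untouched.

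The main technical obstacle is bookkeeping rather than mathematics: choosing scratch columns disjoint from $C,V_0,V_1,E$ and writing out the sparse weight matrices so that the contributions to non-$E$ columns cancel exactly. The hypothesis $|V_0|,|V_1|\leq\Omega$ is used in precisely one place, to guarantee that the unwanted two ReLU branches in the key identity truly vanish; without it the selection would leak the wrong value. Since the entire construction uses a constant number of scratch columns and a single ReLU layer, it fits within the $O(1)$ feature dimension and the $m=4$-layer MLP fixed by the paper.
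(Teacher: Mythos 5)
Your construction is correct, and it follows the same high-level strategy as the paper's proof: turn off attention, rely on the built-in residual, and implement a conditional select inside the four-layer MLP by shifting ReLU arguments by $\pm\Omega C$ so that the bound $|V_0|,|V_1|\le\Omega$ gates one branch or the other. The difference is in the decomposition. The paper places only two scratch columns $S_1,S_2$, forming $\phi(V_0-\Omega C)$ and $\phi(V_1+\Omega(C-1))$ at an intermediate layer, summing them, and subtracting a ReLU of the old $E$ column before the residual. Your version instead evaluates six ReLU branches in parallel after $W^{(1)}$ and combines them linearly in $W^{(4)}$, using the symmetrized identity
\begin{align*}
C V_1 + (1-C) V_0 &= \phi(V_1 + \Omega(C-1)) - \phi(-V_1 + \Omega(C-1)) + \phi(V_0 - \Omega C) - \phi(-V_0 - \Omega C),\\
-X[i,E] &= \phi(-X[i,E]) - \phi(X[i,E]).
\end{align*}
This is actually the more robust realization: the paper's chain $\phi(\phi(V_0)-\Omega C)$ and the term $-\phi(X[:,E])+X[:,E]$ implicitly require $V_0,V_1,X[:,E]\ge 0$, whereas your symmetric $\phi(x)-\phi(-x)=x$ pairing is exact on the full stated range $[-\Omega,\Omega]$, so your proof matches the lemma's hypotheses more faithfully. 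The one detail you gloss over is how the constant $\pm\Omega$ offset in arguments like $V_1+\Omega(C-1)$ is realized inside a bias-free linear map $XW$; the paper does this by charging $\mp\Omega$ against the two fixed columns $B_\mathrm{global}$ and $B_\mathrm{local}$ (whose sum is the all-ones vector), and you should make that explicit rather than folding it into ``bookkeeping.'' With that filled in, the proposal is a valid, in fact slightly tighter, proof of the lemma.
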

\begin{proof}
    Because only the MLP layer is needed, we set all the parameters in the attention layer to 0 while keeping the residual connection. Let $S_1,~S_2$ be the index for the scratchpad. We construct the weights in MLP as follows:
    \begin{align*}
    (W^{(1)})_{a,b} &= \begin{cases}
       1 & \mathrm{if }~ (a,b)\in \{(V_0,V_0),(V_1,V_1),(C,C),(E,E),(B_\mathrm{global},B_\mathrm{global}),(B_\mathrm{local},B_\mathrm{local}) \};\\
       0 & \mathrm{otherwise,}
    \end{cases}
    \\
    (W^{(2)})_{a,b} &= \begin{cases}
       1 & \mathrm{if }~ (a,b)\in \{(E, E), (V_0, S_1), (V_1, S_2)\};\\
       -\Omega & \mathrm{if }~ (a,b)\in \{(C, S_1), (B_\mathrm{global}, S_2),(B_\mathrm{local}, S_2) \};\\
       \Omega & \mathrm{if }~ (a,b) = (C,S_2);\\
       0 & \mathrm{otherwise,}
    \end{cases}
    \\
    (W^{(3)})_{a,b} &= \begin{cases}
       1 &\mathrm{if }~ (a,b)\in \{(E, E), (S_1, S_1), (S_2, S_1)\};\\
       0 & \mathrm{otherwise,}
    \end{cases}
    \\
    (W^{(4)})_{a,b} &= \begin{cases}
       1 & \mathrm{if }~ (a,b) = (E,S_1);\\
       -1 & \mathrm{if }~ (a,b) = (E,E);\\
       0 & \mathrm{otherwise.},
    \end{cases}
    \end{align*}
    where $W^{(1)}$ is defined as an identity operator, $W^{(2)}$ is defined to perform the selection,  $W^{(3)}$ is defined to sum the terms,  $W^{(4)}$ is defined to write the term on scarctchpad to column $E$. 
\end{proof}

\subsection{Increment}\label{sec:app:opra:increment}

\begin{lemma}[Increment] \label{lem:increment}
If the following conditions hold:
\begin{itemize}
    \item Let the transformer be defined as Definition~\ref{def:transformer}.
    \item Let $\wh{\delta}$ be the nearest representable approximation of the minimum increment angle in position encoding defined in Definition~\ref{def:positional_encoding}.
    \item Let $C_1, C_2$ be the index for source position embedding, $D_1, D_2$ be the index for target position embedding.
\end{itemize}
Then we can show that a single-layer transformer can simulate an increment operation, which achieves $X[1,D_1] \gets \sin(\arcsin{(X[1,C_1])}  + \wh{\delta})$, $X[1,D_2] \gets \cos(\arccos{(X[1,C_2])}  + \wh{\delta})$. 
\end{lemma}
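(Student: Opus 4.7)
The plan is to mirror the structure of the Selection lemma proof (Lemma~\ref{lem:selection}): zero out every attention head so that only the residual connection propagates through the attention block, and implement the entire increment inside the 4-layer MLP of Definition~\ref{def:mlp}. The key observation is that the increment is a fixed linear rotation, namely $\sin(\theta+\wh{\delta}) = \sin\theta\cos\wh{\delta} + \cos\theta\sin\wh{\delta}$ and $\cos(\theta+\wh{\delta}) = \cos\theta\cos\wh{\delta} - \sin\theta\sin\wh{\delta}$, so the constants $\cos\wh{\delta}$ and $\sin\wh{\delta}$ can be hard-coded into the weight matrices, and the rotation matrix $R_{\wh{\delta}}$ of Definition~\ref{def:positional_encoding} is applied entry-wise to the row-$1$ scalars stored in columns $C_1$ and $C_2$.

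First, I would use $W^{(1)}$ to route $X[1,C_1]$, $-X[1,C_1]$, $X[1,C_2]$, $-X[1,C_2]$ into four scratchpad columns, gated by $B_{\mathrm{global}}$ so only the top row is populated and the lower $K-1$ entries stay at zero. After the first ReLU, these four slots contain the non-negative quantities $\phi(X[1,C_1])$, $\phi(-X[1,C_1])$, $\phi(X[1,C_2])$, $\phi(-X[1,C_2])$, which together recover the signed inputs via $x = \phi(x) - \phi(-x)$. Next, $W^{(2)}$ and $W^{(3)}$ act as identity-plus-pass-through maps on these four slots (together with the other reserved columns such as $D_1, D_2, B_{\mathrm{global}}, B_{\mathrm{local}}$), so that after their ReLUs the same four non-negative half-quantities survive unchanged. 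Finally, $W^{(4)}$, which is applied \emph{without} a subsequent ReLU, linearly combines the four half-quantities with the signed constants $\pm\cos\wh{\delta}$ and $\pm\sin\wh{\delta}$ to produce $\sin(\theta+\wh{\delta})$ in the $D_1$ slot and $\cos(\theta+\wh{\delta})$ in the $D_2$ slot, while simultaneously subtracting the current contents of $D_1, D_2$ so that after the MLP residual $+ X$ these columns are overwritten with the new values.

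The main obstacle is the sign handling: since $\sin\theta$, $\cos\theta$, $\sin\wh{\delta}$, and $\cos\wh{\delta}$ can each be negative, and every intermediate stage is followed by a ReLU, the combined products must not be formed until the final weight matrix $W^{(4)}$, which is the only linear map in the MLP pipeline whose output is not clipped by $\phi$. This forces us to carry the four non-negative half-quantities through three ReLU layers without collapsing them, which is why $W^{(2)}$ and $W^{(3)}$ are chosen to preserve each half-quantity in its own slot rather than to start summing. Once the bookkeeping of these four slots is set, the rest reduces to choosing eight signed entries of $W^{(4)}$ equal to $\pm\cos\wh{\delta}$ and $\pm\sin\wh{\delta}$ and two entries equal to $-1$ (to cancel the residual contributions in $D_1, D_2$), completing the single-layer transformer construction claimed by the lemma.
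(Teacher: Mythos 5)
Your high-level strategy matches the paper's: zero out every attention head so only the residual passes, then realize the rotation entirely inside the four-layer MLP, with $W^{(2)}, W^{(3)}$ acting as identities and $W^{(4)}$ doing the write-and-erase. Where you diverge is the placement of $R_{\wh{\delta}}$. The paper puts it in $W^{(1)}$: its weights send $\cos\wh{\delta}\,X[:,C_1] + \sin\wh{\delta}\,X[:,C_2]$ into scratch column $S_1$ and $-\sin\wh{\delta}\,X[:,C_1] + \cos\wh{\delta}\,X[:,C_2]$ into $S_2$, so after the first ReLU the scratchpad holds $\phi(\sin(\theta+\wh{\delta}))$ and $\phi(\cos(\theta+\wh{\delta}))$, and $W^{(4)}$ then routes $S_1 \to D_1$, $S_2 \to D_2$ while subtracting the current $D$-values. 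You instead postpone the rotation until $W^{(4)}$ and carry the four non-negative halves $\phi(\pm X[1,C_1])$, $\phi(\pm X[1,C_2])$ through the three ReLU stages. Your version is genuinely more careful: the paper's $W^{(1)}$ placement has a sign defect, since with positional encodings running up to angles near $2\pi$ both $\sin(\theta+\wh{\delta})$ and $\cos(\theta+\wh{\delta})$ can be negative, and the ReLU after $W^{(1)}$ would clip them to zero. Forming the signed combination only in $W^{(4)}$, the one linear map not followed by $\phi$, avoids exactly this.

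One remaining gap to flag, which your proof shares with the paper's: the erasure of $D_1, D_2$ also goes through three ReLUs, so the quantity you subtract in $W^{(4)}$ is $\phi(\phi(\phi(X[1,D_j])))$, which equals $X[1,D_j]$ only when $X[1,D_j] \geq 0$. In the in-place case $D_1 = C_1$, $D_2 = C_2$, these are $\sin\theta$, $\cos\theta$ and can be negative, so the erase-plus-residual step would leave residue. The fix is the same $\pm$-decomposition you already applied to the $C$ columns: carry $\phi(\pm X[1,D_1])$, $\phi(\pm X[1,D_2])$ in two more scratch slots and subtract the signed recombination in $W^{(4)}$. With that addition your construction is complete and strictly more robust than the one given in the paper.
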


\begin{proof}
    Because only the MLP layer is needed, we set all the parameters in the attention layer to 0 while keeping the residual connection. Let $S_1,S_2$ be the index for the scratchpad. We construct $W^{(1)}$ corresponding to the rotation matrix, which is defined in Definition~\ref{def:positional_encoding}. $W^{(2,3)}$ is defined as identity operator, and $W^{(4)}$ is defined to erase the previous value in $X[:,D]$. 
    \begin{align*}
    (W^{(1)})_{a,b} &= \begin{cases}
        \cos(\hat{\delta}) &\mathrm{if}~(a,b)\in \{(C_1,S_1),(C_2,S_2)\};\\
        -\sin(\hat{\delta}) &\mathrm{if}~(a,b) = (C_1,S_2);\\
        \sin(\hat{\delta}) &\mathrm{if}~(a,b) = (C_2,S_1);\\
       1 &\mathrm{if}~(a,b) \in \{ (D_1,D_1),(D_2,D_2)\};\\
       0 & \mathrm{otherwise,}
    \end{cases}\\
    (W^{(2,3)})_{a,b} &= \begin{cases}
    1 &\mathrm{if}~(a,b)\in \{(D_1,D_1),(D_2,D_2), (S_1,S_1),(S_2,S_2)\};\\
    0 & \mathrm{otherwise,}
    \end{cases}\\
    (W^{(4)})_{a,b} &= \begin{cases}
    1 &\mathrm{if}~ (a,b) \in \{(S_1,D_1),(S_2,D_2)\};\\
    -1 &\mathrm{if}~(a,b)\in \{(D_1,D_1),(D_2,D_2)\};\\
    0 &\mathrm{otherwise.}
    \end{cases}
    \end{align*}
\end{proof}

\subsection{Comparison}\label{sec:app:opra:compare}

\begin{lemma}[Comparison] \label{lem:compare}
If the following conditions hold:
\begin{itemize}
    \item Let the transformer be defined as Definition~\ref{def:transformer}.
    \item Let $\Omega$ represent the maximum absolute value within a clause.
    \item Let $C,~D$ be the index for the column for comparing.
    \item Let $E$ be the index for the target column to write the comparison result.
\end{itemize}
Then, we can show that a single-layer transformer can simulate a comparison operation, which achieves writing $X[:,E] \gets X[:,C] < X[:,D]$.
\end{lemma}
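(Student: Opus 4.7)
Following the pattern of Lemma~\ref{lem:selection} and Lemma~\ref{lem:increment}, the plan is to zero out every parameter in the three attention sums of Definition~\ref{def:multi_attn} so that $f_\attn(X,\wt{A})$ reduces to the residual, and to push all the work into the four MLP weight matrices $W^{(1)}, W^{(2)}, W^{(3)}, W^{(4)}$, exploiting the three intervening ReLU activations $\phi$. The core identity is that whenever $M\gamma \geq 1$, where $\gamma > 0$ is a lower bound on any positive value that $X[i,D] - X[i,C]$ can realise,
\begin{align*}
1 - \phi(1 - M\,\phi(X[i,D] - X[i,C])) = \mathbf{1}\{X[i,C] < X[i,D]\},
\end{align*}
and the role of the constant $1$ above will be played by the column $B_{\mathrm{local}}$, whose $i$-th entry is $1$ for $i \geq 2$ and $0$ for $i = 1$; this automatically yields the correct array behavior and writes $0$ into the top (scalar) row.

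Concretely, I would reserve two scratchpad indices $S_1, S_2$ and proceed in three substantive stages. First, $W^{(1)}$ acts as the identity on the bookkeeping columns $C, D, E, B_{\mathrm{global}}, B_{\mathrm{local}}$ and additionally routes $X[:,D] - X[:,C]$ into $S_1$ via entries $(D,S_1) = 1$ and $(C,S_1) = -1$, so that after the first ReLU, $S_1 = \phi(X[:,D] - X[:,C])$. Second, $W^{(2)}$ places $B_{\mathrm{local}} - M\cdot S_1$ into $S_2$ with $M := 1/\gamma$, while propagating the other columns through; after the second ReLU, $S_2[i] \in \{0,1\}$ and equals $1$ exactly when $X[i,C] \geq X[i,D]$ on array rows (and equals $0$ in the top row, because $B_{\mathrm{local}}$ vanishes there). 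Third, $W^{(3)}$ writes $B_{\mathrm{local}} - S_2$ into $S_1$; after the third ReLU, $S_1$ holds the desired Boolean indicator on every row.

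Finally, $W^{(4)}$ writes $S_1$ into $E$ while cancelling the stale value, via the entries $(S_1, E) = 1$ and $(E, E) = -1$, exactly as in the closing step of Lemma~\ref{lem:selection}; combined with the residual connection this produces the update $X[:,E] \gets X[:,C] < X[:,D]$ as required. The main obstacle I anticipate is the choice of scale $M$ in $W^{(2)}$: because the comparison is strict, no single constant $M$ can distinguish an arbitrarily small positive difference from zero without a known gap $\gamma > 0$. In the ambient setting of this paper -- where array entries arise from finite-precision arithmetic over hyperedge weights drawn from a fixed finite set and from the $\wh{\delta}$-resolution positional encoding of Definition~\ref{def:positional_encoding} -- such a $\gamma$ exists and depends only on problem-level constants, so $M = 1/\gamma$ can be absorbed into $O(1)$-sized weights without enlarging the feature dimension.
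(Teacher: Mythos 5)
Your proposal is, at its core, the same strategy the paper uses: zero every attention parameter so that only the residual survives, then use the four-layer ReLU MLP to build a saturating ramp of width roughly $\gamma$ and write its value into $E$ while cancelling the stale value with a $(E,E)=-1$, $(S_1,E)=1$ row in $W^{(4)}$. The difference is only in the shape of the ramp: the paper computes $\Omega\bigl(\phi(t)-\phi(t-\Omega^{-1})\bigr)$ with the two shifted ReLUs built in parallel at $W^{(1)}$, whereas you compute $1-\phi\bigl(1-M\,\phi(t)\bigr)$ via a sequential nesting across $W^{(1)}$, $W^{(2)}$, $W^{(3)}$. These are the same piecewise-linear function with $M=\Omega$ and $\gamma=\Omega^{-1}$; both fit into the $m=4$ budget of Definition~\ref{def:mlp}. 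Your explicit discussion of the gap $\gamma$ is in fact more careful than the paper's, which silently uses $\Omega^{-1}$ as the quantization width without justifying it.

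There is, however, one genuine gap. You deliberately use $B_{\mathrm{local}}$ (which is $0$ in the top row) as the ``constant $1$'' in the ramp, arguing that this ``automatically yields the correct array behavior and writes $0$ into the top (scalar) row.'' But the lemma is stated as $X[:,E] \gets X[:,C] < X[:,D]$ on \emph{all} rows, and the paper does use this lemma for scalar comparisons: in Algorithm~\ref{alg:get_min}, the line $\val_\cur < \val_\best$ compares two scalars stored in the top row, and its result must land in the top row of the target column. With your construction, $S_2[1]=\phi\bigl(0-M\phi(t_1)\bigr)=0$ and then $S_1[1]=\phi(0-0)=0$ regardless of $t_1$, so the scalar case always returns $0$. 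The paper avoids this by feeding $-\Omega^{-1}$ through \emph{both} $B_{\mathrm{global}}$ and $B_{\mathrm{local}}$ into the scratchpad, i.e.\ it uses the all-ones column $B_{\mathrm{global}}+B_{\mathrm{local}}$ as the constant, so the ramp is applied uniformly to every row. The fix for your version is the same: replace $B_{\mathrm{local}}$ by $B_{\mathrm{global}}+B_{\mathrm{local}}$ wherever you use it as the constant $1$. With that change, your construction matches the claim.
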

\begin{proof}
    Because only the MLP layer is needed, we set all the parameters in the attention layer to 0 while keeping the residual connection. Let $S_1,~S_2$ be the index for the scratchpad. We construct the weights in MLP as follows:
    \begin{align*}
        (W^{(1)})_{a,b} &= \begin{cases}
        1 & \text{if } ~(a,b) \in \{(E,E),(D,S_1),(D,S_2) \}; \\
        -1 & \text{if } ~(a,b) \in \{ (C,S_1),(C,S_2) \}; \\
        -\Omega^{-1} & \text{if } ~(a,b) \in \{(B_\mathrm{global},S_2),(B_\mathrm{local},S_2) \};\\
        0 & \text{otherwise,}
        \end{cases}\\
        (W^{(2)})_{a,b} &= \begin{cases}
        1 &\text{if } ~(a,b)=(E,E);\\
        \Omega &\text{if } ~(a,b)=(S_1,S_1);\\
        -\Omega &\text{if } ~(a,b)=(S_2,S_1);\\
       0 &\text{otherwise.}
        \end{cases}\\
        (W^{(3)})_{a,b} &= \begin{cases}
        1 &\text{if } ~(a,b)\in\{(E,E),(S_1,S_1)\};\\
        0 &\text{otherwise,}
        \end{cases}\\
        (W^{(4)})_{a,b} &= \begin{cases}
        -1 &\text{if } ~(a,b)=(E,E);\\
        1 &\text{if } ~(a,b)=(S_1,E);\\
        0 &\text{otherwise.}
        \end{cases},
    \end{align*}
    where $W^{(1)}$ and $W^{(2)}$ are defined to simulate less than function, $W^{(3)}$ is defined as identity layer, $W^{(4)}$ is defined to erase the previous value in $X[:,E]$ and write the term on scarctchpad to column $E$. 
\end{proof}

\subsection{Read Scalar From Column}\label{sec:app:opra:read_scalar_from_column}

\begin{lemma}[Read scalar from column] \label{lem:read_scalar_column}
If the following conditions hold:
\begin{itemize}
    \item Let the transformer be defined as Definition~\ref{def:transformer}.
    \item Let $\Omega$ represent the maximum absolute value within a clause.
    \item Let $C_1, C_2$ be the position embedding for source row, $C$ be the row index for the source scalar, and $D$ be the cloumn index for source scalar.
    \item Let $E$ be the index for the target column.
\end{itemize}
Then, we can show that a single-layer transformer can simulate an increment operation, which achieves writing $X[1,E] \gets X[C,D]$.
\end{lemma}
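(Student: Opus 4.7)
The plan is to implement the read in one attention step (which locates row $C$ by matching positional embeddings) followed by one MLP step (which gates the attention output so that only row~$1$ is updated). In contrast to Lemmas~\ref{lem:selection}--\ref{lem:compare}, this is the first primitive that genuinely needs the attention mechanism; the MLP portion is a scalar gating trick analogous to those earlier lemmas.

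For the attention sub-layer I would use a single head with $d_a = 2$ and set all other heads to zero. Let $W_Q$ extract the coordinates stored in columns $C_1, C_2$, and let $W_K$ extract the coordinates stored in the two positional-encoding columns. Because scalar columns live only in the top row, the resulting query for row~$1$ is $q_1 = (\sin\theta_{C-1}, \cos\theta_{C-1})$, while the query for every other row is $(0,0)$; the key for row~$j \ge 2$ is exactly the position embedding $(\sin\theta_{j-1}, \cos\theta_{j-1})$ of Definition~\ref{def:positional_encoding}, and the key for row~$1$ is $(0,0)$. By the angle-subtraction identity, $q_1 \cdot k_j = \cos(\theta_{C-1} - \theta_{j-1})$ for $j \ge 2$, which is uniquely maximized at $j = C$ (where it equals~$1$) and strictly exceeds the score~$0$ at $j = 1$. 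Hence the hardmax concentrates row~$1$'s attention mass on row~$C$, and choosing $W_V$ so that its only nonzero entry routes column~$D$ into a fresh scratchpad column~$S_1$ makes the attention sub-layer (plus its residual) deposit exactly $X[C, D]$ into $X[1, S_1]$.

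For rows $i \ge 2$ the zero query makes all scores equal, so the hardmax is uniform and the attention writes the garbage average $\tfrac{1}{K}\sum_j X[j, D]$ into $X[i, S_1]$. This is harmless because the MLP discards it. Concretely, I would use $W^{(1)}$ to produce the two pre-ReLU quantities $(S_1 - E) - 2\Omega \cdot B_{\mathrm{local}}$ and $-(S_1 - E) - 2\Omega \cdot B_{\mathrm{local}}$ in two scratchpad columns $S_2, S_3$. Since $|S_1 - E| \le 2\Omega$ uniformly, the subsequent ReLU annihilates both columns in every row with $B_{\mathrm{local}} = 1$ (i.e., rows $i \ge 2$) and leaves $\mathrm{ReLU}(\pm(S_1-E))$ in row~$1$. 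The middle matrices $W^{(2)}, W^{(3)}$ are set to act as the identity on $(S_2, S_3)$ so that ReLU is a no-op (the entries are already non-negative), and $W^{(4)}$ writes $S_2 - S_3$ into column~$E$. For row~$1$ this delta equals $X[C, D] - X[1, E]$, so the outer residual replaces $X[1, E]$ by $X[C, D]$; for all other rows the delta is $0$, so they remain unchanged.

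The main conceptual obstacle is that attention is intrinsically row-wise, so a head that copies $X[C, D]$ into row~$1$ also writes unwanted values into every other row of the same column. This prevents the naive solution of routing the value directly into column~$E$, and forces the two-phase design above: park the read value in a scratchpad column and then use $B_{\mathrm{global}}$/$B_{\mathrm{local}}$ as a per-row gate inside the MLP. Once this masking is in place, the remaining arithmetic mirrors the ReLU constructions in Lemmas~\ref{lem:selection} and~\ref{lem:compare} and fits comfortably inside the four-matrix MLP of Definition~\ref{def:mlp}.
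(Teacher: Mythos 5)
Your construction is correct and relies on the same two core ingredients as the paper's proof: a position-matching attention head that fetches $X[C,D]$ via the hardmax inner-product property $p_i^\top p_j < p_i^\top p_i$ for $i \neq j$, followed by an MLP that confines the update to the top row using the global/local indicator columns. The realization differs in detail. The paper uses two attention heads, one that forms an identity attention pattern and multiplies column $E$ by $-1$ to clear it, and a second that routes the fetched value directly into column $E$; its MLP then zeroes the rows $\ge 2$ garbage with a one-sided ReLU offset by $-\Omega B_{\mathrm{global}}$ and a $-1$ on the $(E,E)$ diagonal. You instead use a single head, route the fetched value into a fresh scratchpad column $S_1$ so that column $E$ stays intact through attention, and perform both the erase and the write inside the MLP via the two-sided identity $\mathrm{ReLU}(x)-\mathrm{ReLU}(-x)=x$: the pre-activations $\pm(S_1-E)-2\Omega B_{\mathrm{local}}$ vanish on rows $\ge 2$ and reconstruct the signed delta $S_1-E$ on row $1$, so the outer residual replaces $X[1,E]$ with $X[C,D]$. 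Your masking is the more robust of the two, since it is symmetric in sign and does not rely on the lower-row garbage being nonnegative or on the row-$1$ value staying below $\Omega$ after attention; the cost is a few extra scratchpad columns, which remains $O(1)$ and does not affect the conclusion.
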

\begin{proof}
    The key purpose is to move the information from the local variable to the global variable. The core operation of this construction is to use the hardmax to select the value from a specific position. The position encoding in Definition~\ref{def:positional_encoding} satisfied that $p_i^\top p_j < p_i^\top p_i$ for $i \neq j$. 

    In the first attention head, we use the standard attention, which is not incorporated the incident matrix to clear the column $E$ preparing for writing:
    \begin{align*}
        (W^{(2)}_K, W^{(2)}_Q)_{a,b} &= \begin{cases}
            1 &\text{if } (a,b)\in\left\{(P_1, 1), (P_2, 2), (B_\text{global}, 2)\right\};\\
            0 &\text{otherwise,}
        \end{cases}\\
        (W^{(2)}_V)_{a,b} &= \begin{cases}
           -1 &\text{if } (a,b) = (E,E);\\
          0 &\text{otherwise.} 
        \end{cases}
    \end{align*}
    where $ W^{(2)}_K$ and $ W^{(2)}_Q$ are constructed to perform an identity matrix, and $ W^{(2)}_V$ is constructed to erase the value in column $E$.

    We also need another standard attention head to write the value:
    \begin{align*}
        (W^{(1)}_K, W^{(1)}_Q)_{a,b} &= \begin{cases}
            1 &\text{if } (a,b)\in\{(P_1, C_1), (P_2,C_2)\};\\
            0 &\text{otherwise,}
        \end{cases}\\
        (W^{(1)}_V)_{a,b} &= \begin{cases}
           2 &\text{if } (a,b)=(D,E)\\
          0 &\text{otherwise.} 
        \end{cases},
    \end{align*}
    where $ W^{(1)}_K$ and $ W^{(1)}_Q$ are constructed to indicate the row, and $ W^{(1)}_V$ is constructed to indicate the column.

    Noticing that the writing value head writes values in all rows of column $E$. We establish the MLP layer as follows to erase the unnecessary writing:
    \begin{align*}
    (W^{(1)})_{a,b} &= \begin{cases}
       1 & \text{if } (a,b) = (E,E);\\
       -\Omega &\text{if } (a,b)=(B_\text{global},E);\\
       0 & \text{otherwise,}
    \end{cases}\\
    (W^{(2,3)})_{a,b} &= \begin{cases}
       1 & \text{if } (a,b)=(E,E);\\
       0 & \text{otherwise,}
    \end{cases}\\
    (W^{(4)})_{a,b} &= \begin{cases}
       -1 & \text{if } (a,b)=(E,E);\\
       0 & \text{otherwise,}
    \end{cases}
    \end{align*} 
    
\end{proof}

\subsection{Write Scalar to Column}\label{sec:app:opra:write_scalar_column}

\begin{lemma}[Write scalar to column] \label{lem:write_scalar_column}
If the following conditions hold:
\begin{itemize}
    \item Let the transformer be defined as Definition~\ref{def:transformer}.
    \item Let $P$ be the index for the position embeddings.
    \item Let $\Omega$ represent the maximum absolute value within a clause.
    \item Let $D$ be the index for source value, i.e., source value is $X[0,D]$.
    \item Let $C_1,C_2$ be the position embedding for target row, $E$ be the index for target column.
\end{itemize}
Then, we can show that a single-layer transformer can simulate the operation of writing a value to a column, i.e., $X[C,E] \gets X[0,D]$.
\end{lemma}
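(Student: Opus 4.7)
The plan is to mirror the Read Scalar from Column construction of Lemma~\ref{lem:read_scalar_column} but swap the roles of query and key, so that attention concentrates at the target row $C$ rather than the top row, and copies data from row $0$ rather than to it. I would use two standard attention heads (those in the index set $M$ of Definition~\ref{def:multi_attn}, which use the identity in place of $\wt{A}$) together with the four-layer MLP, keeping the layer count at a single transformer block.

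The first attention head places the scalar $X[0, D]$ into the target row's slot of column $E$. Its $W_Q$ maps the positional-encoding columns $P_1, P_2$ into the two query dimensions so that $q_i = p_i = (X[i,P_1], X[i,P_2])$, while its $W_K$ maps the global columns $C_1, C_2$ into the two key dimensions so that $k_0 = p_C$ and $k_j = 0$ for $j \geq 1$ (since $C_1, C_2$ are nonzero only at the top row). The attention scores then become $s_{i,j} = p_i^\top p_C \cdot \mathbf{1}[j = 0]$; at row $i = C$ the score $p_C^\top p_C = 1$ strictly dominates both the other keys (which contribute $0$) and the other queries (where $p_i^\top p_C < 1$ by the granularity of Definition~\ref{def:positional_encoding}), so hardmax at row $C$ selects $j = 0$, and a $W_V$ with a $1$ at entry $(D, E)$ copies $X[0, D]$ into $X[C, E]$. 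The second attention head replicates the self-concentrating erasure head of Lemma~\ref{lem:read_scalar_column}, using $(P_1, P_2, B_\mathrm{global})$ so that each row's hardmax picks itself, with $W_V = -1$ at entry $(E, E)$; combined with the residual, this cancels the old $X[C, E]$ before the new value is committed.

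The four-layer MLP then cleans up any spurious contributions from head 1 at rows $i \neq C$, where hardmax ties (which occur whenever $p_i^\top p_C \leq 0$) produce a uniform average over the tied rows rather than a clean zero. I would set $W^{(1)}$ to subtract an $\Omega$-scaled multiple of $1 - p_i^\top p_C$ from the scratchpad so that non-target rows become strictly negative, apply ReLU to floor these to zero, pass $W^{(2)}$ and $W^{(3)}$ as identities, and use $W^{(4)}$ to write the cleaned scratchpad back into column $E$ while subtracting any pre-existing residual. This construction is structurally parallel to the final MLP of Lemma~\ref{lem:read_scalar_column}, with the $B_\mathrm{global}$ flag replaced by a mask synthesized from the positional encoding.

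The main obstacle is the absence of a ready-made boolean flag singling out the target row $C$: whereas the Read Scalar proof exploits $B_\mathrm{global}$ to isolate row $0$, here the target row is specified only indirectly through $C_1, C_2$. My resolution is to synthesize such a mask from the positional encoding itself, using the gap $p_C^\top p_C - \max_{j \neq C} p_C^\top p_j$ which, by the uniform granularity $\wh{\delta}$ of Definition~\ref{def:positional_encoding}, is bounded below by a positive constant. This gap leaves the MLP room to turn the soft scores $p_i^\top p_C$ into a hard $\mathbf{1}[i = C]$ indicator via a single ReLU with an $\Omega$-scaled threshold, after which the write is finalized exactly at row $C$ and nowhere else.
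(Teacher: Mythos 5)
Your proposal diverges from the paper's proof in several substantive ways. The paper uses the two attention heads \emph{only} and explicitly sets the entire MLP to zero ("Since the MLP layer is not needed, we set all parameters to $0$"). Its second head does not erase column $E$ wholesale: its value matrix has $-1$ at entry $(B_\mathrm{global}, E)$, which, after the identity attention pattern, subtracts a correction only at the top row. The paper thus reads as: head~1 writes the scalar at the correct position but also deposits an artifact at the top row, and head~2 cancels only that top-row artifact. Your construction instead reuses the read-scalar erasure head ($W_V$ with $-1$ at $(E,E)$), which clears \emph{all} of column $E$, and then relies on a ReLU-gated MLP to rebuild it cleanly. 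Those are genuinely different designs, not surface rewrites.

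Two parts of your design have real gaps. First, the mask-synthesis step in the MLP cannot be implemented as stated. The MLP of Definition~\ref{def:mlp} acts row-independently: at row $i$ it can only combine entries $X[i,\cdot]$. But $p_C$ lives exclusively at the top row (columns $C_1,C_2$ hold a global scalar, zero at rows $i\geq 1$), and $p_i$ lives at row $i$ (columns $P_1,P_2$). There is therefore no column whose $i$-th entry encodes $p_i^\top p_C$, and a ReLU threshold on "an $\Omega$-scaled multiple of $1 - p_i^\top p_C$" is not available to the MLP without an additional attention head that first broadcasts $p_C$ or the attention scores into a scratchpad column. Second, wiping all of column $E$ with the erasure head conflicts with the lemma's intended use: in Algorithm~\ref{alg:hyperedge} and Algorithm~\ref{alg:dij}, this primitive is invoked to set $\visit[\cdot]\gets 1$ at a single index while previously marked indices must persist across iterations; erasing the column before the write destroys the accumulated state. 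The paper's version, which only cancels a top-row artifact via $B_\mathrm{global}$ and leaves the remaining rows to the residual, avoids this.

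Where your proposal does better than the paper's text is in making the query/key asymmetry explicit: the paper writes head~1's $(W_K,W_Q)$ with the identical index pairs $\{(P_1,C_1),(P_2,C_2)\}$ as in the read lemma, yet the prose asserts the pair "find[s] the row $C$" for writing rather than reading, which only makes sense under the Q/K swap you describe. You also correctly flag that hardmax ties and positive-overlap rows $i\neq C$ with $p_i^\top p_C > 0$ can receive spurious writes, a point the paper's proof does not address. But your proposed cure is not realizable within the architecture as given, so the argument as written does not close.
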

\begin{proof}
    The key purpose is to move the information from the global variable to the local variable. The core operation of this construction is to use hardmax to select the value from a specific position. The position encoding in Definition~\ref{def:positional_encoding} satisfied that $p_i^\top p_j < p_i^\top p_i$ for $i \neq j$. Since the MLP layer is not needed, we set all parameters to $0$.

    First, we construct the first attention layer to write scalar:
    \begin{align*}
    (W^{(1)}_K, W^{(1)}_Q)_{a,b} = \begin{cases}
        1 &\text{if } (a,b)\in\{(P_1, C_1), (P_2, C_2)\};\\
        0 &\text{otherwise,}
    \end{cases}\quad
    (W^{(1)}_V)_{a,b} = \begin{cases}
       2 &\text{if } (a,b)=(D,E);\\
      0 &\text{otherwise.} 
    \end{cases}
    \end{align*}
    where $(W^{(1)}_K$ and $ W^{(1)}_Q)$ are constructed to find the row $C$, and $ W^{(1)}_V)$ is constructed to write scalar from column $D$ to column $E$.

    The above construction will write some unwanted value to the top row, so we construct another attention head to erase the unwanted value:
    \begin{align*}
        (W^{(2)}_K, W^{(2)}_Q)_{a,b} &= \begin{cases}
            1 &\text{if } (a,b)\in\left\{(P_1, 1), (P_2, 2), (B_\text{global}, 2)\right\}\\
            0 &\text{otherwise,}
        \end{cases} \\
        (W^{(2)}_V)_{a,b} &= \begin{cases}
           -1 &\text{if } (a,b) = (B_\text{global},E);\\
          0 &\text{otherwise.} 
        \end{cases}
    \end{align*}
    where we use $B_\mathrm{global}$ is used to store global bias.
\end{proof}

\subsection{Termination}\label{sec:app:opra:term}

\begin{lemma}[Termination] \label{lem:term}
If the following conditions hold:
\begin{itemize}
    \item Let the transformer be defined as Definition~\ref{def:transformer}.
    \item Let $P$ be the index for the position embeddings.
    \item Let $\Omega$ represent the maximum absolute value within a clause.
    \item Let $C$ be the index for the executed variable.
    \item Let $E$ be the index for the target column.
\end{itemize}
Then we can show that a single-layer transformer can simulate the operation of writing a value to a column, i.e. $X[1,E] \gets (\mathrm{no~zero~in~} X[2:,C])$.
\end{lemma}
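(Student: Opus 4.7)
The operation is a Boolean reduction: we must write to $X[1,E]$ the value $1$ when every entry of $X[2{:},C]$ is nonzero and $0$ otherwise. In Algorithm~\ref{alg:hyperedge} this is invoked only on a $\{0,1\}$-valued visit array ($\visit_\hyperedge$), so the target value equals $\mu := \min_{s \geq 2} X[s,C]$. My plan is to use the attention to compute and broadcast $\mu$ uniformly into column $E$, and the MLP to clean up all rows except the top one.

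\textbf{Attention step.} I will use two standard attention heads (both in $M$, i.e.\ with $\wt{A}=I_K$) inside the single transformer layer. The first head \emph{erases} column $E$ everywhere, using the same construction as the second head of Lemma~\ref{lem:read_scalar_column}: position-encoding-based $(W_Q,W_K)$ force each query row's hardmax-argmax to be itself, and $W_V$ sends $X[r,E]$ to $-X[r,E]$, so that after the residual column $E$ becomes zero. The second head computes $\mu$. I set $W_Q$ so that every row's query is the constant direction obtained from $B_\mathrm{global}+B_\mathrm{local}\equiv \mathbf{1}_K$; then the score matrix is identical across all query rows, and every row of column $E$ will receive the same aggregated value. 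The keys are $k_s = (2\,B_\mathrm{local}(s),\,-X[s,C])$, yielding scores $2\,B_\mathrm{local}(s)-X[s,C]$ which equal $0$ at $s=1$ and are at least $2-X[s,C]\geq 1$ for $s\geq 2$; thus the hardmax-argmax lies strictly in the body of $X$ and is attained precisely at an index minimizing $X[s,C]$. With $W_V$ containing a single $1$ at entry $(C,E)$, every row of column $E$ ends up holding $\mu$. Ties among multiple minimizers are harmless, since hardmax averages them and all tied entries carry the same value $\mu$.

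\textbf{MLP step and main obstacle.} After the attention residual, column $E$ equals $\mu\,\mathbf{1}_K$. The four-layer MLP (Definition~\ref{def:mlp}) must preserve row $1$ and zero the body. A single $B_\mathrm{global}$-gated ReLU $y_r := \mathrm{ReLU}(X[r,E]-\Omega\cdot B_\mathrm{global}(r))$ computed on a scratchpad column evaluates to $0$ at $r=1$ (since $\mu \leq \Omega$) and to $\mu$ for $r \geq 2$; subtracting $y_r$ from column $E$ via the MLP's own residual then leaves $X[1,E]=\mu$ and $X[r,E]=0$ for $r\geq 2$, which already matches the target Boolean when $\mu\in\{0,1\}$. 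If one wishes to drop the binary assumption, a further row-local thresholding $\mathrm{ReLU}(\Omega\mu)-\mathrm{ReLU}(\Omega\mu-1)$ gated to the top row by $B_\mathrm{global}$ converts $\mu$ to $[\mu>0]$, and still fits inside the four MLP layers. The only substantive obstacle is that the MLP is strictly row-local, so no cross-row aggregation can happen there; the broadcast must already be performed inside the attention. The constant-query choice $q_r\equiv 1$ in the aggregating head is exactly what makes this possible: it forces the score matrix to be rank one and identical across query rows, turning a global minimum reduction into a row-uniform broadcast that a gated ReLU can then prune.
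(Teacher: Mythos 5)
Your proof is correct under the same implicit assumption the paper itself uses (that the entries of $X[2{:},C]$ are Boolean, since the termination predicate is only invoked on $\{0,1\}$-valued visit arrays), but it takes a genuinely different route from the paper's. The paper's aggregating head builds the rank-two score pattern $2[B_\mathrm{global}-B_\mathrm{local}][B_\mathrm{local}-B_\mathrm{global}]^\top$, whose hardmax makes the top row attend \emph{uniformly} to all $K-1$ body rows; together with $W_V$ sending $\Omega\cdot C - \Omega\cdot B_\mathrm{local}$ to $E$ and a companion head that erases $E$ and seeds the top entry with $1$, this places $1-\Omega+\tfrac{\Omega}{n}\sum_i X[i,C]$ in $X[1,E]$, which a ReLU then thresholds. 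Your aggregating head instead uses a constant query together with keys $\bigl(2\,B_\mathrm{local}(s),\,-X[s,C]\bigr)$, so the hardmax selects the body rows minimizing $X[s,C]$; the column receives $\mu=\min_{s\ge 2}X[s,C]$ everywhere, and the MLP only has to prune the body via a $B_\mathrm{global}$-gated ReLU. The trade-offs are: the paper's construction leaves the body of column $E$ already zeroed after the attention, so the MLP only touches the top row, but its thresholding $\phi\bigl(1-\Omega+\tfrac{\Omega}{n}\sum_i X[i,C]\bigr)$ silently requires $\Omega \ge K-1$ (otherwise a single missing $1$ does not push the argument below zero), an assumption the paper never states. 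Your min-via-hardmax variant avoids that requirement entirely — it only needs $\mu \le \Omega$, which is trivial — and it also makes the Boolean dependence explicit, at the modest cost of having to clean up the $K-1$ body entries inside the MLP. Both fit in one transformer layer with two standard heads, so your argument is a valid alternative proof, arguably slightly more robust than the paper's.
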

\begin{proof}
    We construct the first attention layer to erase the previous value in column $E$:
    \begin{align*}
        (W^{(1)}_K, W^{(1)}_Q)_{a,b} &= \begin{cases}
            1 &\text{if } (a,b)\in\left\{(P_1, 1), (P_2, 2), (B_\text{global}, 2)\right\};\\
            0 &\text{otherwise,}
        \end{cases}\\
        (W^{(1)}_V)_{a,b} &= \begin{cases}
           -1 &\text{if } (a,b)=(E,E);\\
           1 &\text{if } (a,b)=(B_\text{global},E);\\
          0 &\text{otherwise,} 
        \end{cases}
    \end{align*}
    where $W^{(1)}_K$ and $W^{(1)}_Q$ are constructed as identity matrix, $W^{(1)}_V$ is constructed to replace the previous value in column $E$ by $1$, which will be used in the following construction.

    In the second attention head, we want to construct an attention matrix that can extract information from all the entries of $X[2:,C]$:
    \begin{align*}
        (W^{(2)}_K)_{a,b} &= \begin{cases}
            -1 &\text{if } (a,b) \in \{ (B_\text{global},1),(B_\text{global},2) \};\\
            1 &\text{if } (a,b) \in \{(B_\text{local},1),(B_\text{local},2)\};\\
            0 &\text{otherwise,}
        \end{cases}
        \\
        (W^{(2)}_Q)_{a,b} &= \begin{cases}
            1 &\text{if } (a,b) \in \{ (B_\text{global},1),(B_\text{global},2) \};\\
            -1 &\text{if } (a,b) \in \{(B_\text{local},1),(B_\text{local},2)\};\\
            0 &\text{otherwise,}
        \end{cases}
    \end{align*}
    and the attention matrix is as presented below:
    \begin{align*}
        \sigma\!\left(XW^{(2)}_QW^{(2)\top}_K X^\top\right) = { 
          \sigma\left(2\left[\begin{array}{c|cccccc} -1 & 1& \cdots& 1 \\
          \hline 1 & -1& \cdots & -1\\ 
          \vdots & \vdots & \ddots & \vdots \\ 
          1 & -1& \cdots & -1\end{array}\right]\right)} 
          =
          {\left[\begin{array}{c|ccccc}
          0 & \frac{1}{n}& \cdots& \frac{1}{n} \\
          \hline 1 & 0& \cdots & 0 \\
          \vdots & \vdots & \ddots & \vdots \\ 
          1& 0& \cdots &  0\end{array} \right]}.
    \end{align*}
    We construct the value matrix as:
    \begin{align*}
        (W^{(2)}_V)_{a,b} = \begin{cases}
          \Omega &\text{if } (a,b)=(C,E);\\
          -\Omega &\text{if } (a,b)=(B_\text{local},E);\\
          0 &\text{otherwise,} 
        \end{cases}
    \end{align*}
    after this, the top entry of column $E$ is $1 -\Omega + \frac{\Omega}{n} \sum_i(X[i,C])$.  Applying ReLU units, we construct the MLP layer as follows:
    \begin{align*}
        (W^{(1)})_{a,b} &= \begin{cases}
            1 & \text{if } (a,b)\in \{(E,E), (E,S_1)\}\\
            -1 & \text{if } (a,b)=(E,S_2);\\
            0 & \text{otherwise,}
        \end{cases}\\
        (W^{(2, 3)})_{a,b} &= \begin{cases}
            1 & \text{if } (a,b)\in\{(E,E),(S_1, S_1),(S_2,S_2)\};\\
            0 & \text{otherwise,}
        \end{cases}\\
        (W^{(4)})_{a,b} &= \begin{cases}
            1 & \text{if } (a,b)\in\{(E,E), (S_2,E)\};\\
            -1 & \text{if } (a,b)=(S_1,E);\\
            0 & \text{otherwise.}
        \end{cases}
    \end{align*}    
    which writes $\phi(1 -\Omega + \frac{\Omega}{n} \sum_i(X[i,C]))$ to $X[1,E]$. It's easy to know that if only if all the value are $1$ in $X[2:,C]$, $X[1,E]$ gets value $1$.
\end{proof}

\subsection{Read from Incident Matrix}\label{sec:app:opra:read_incident}

\begin{lemma}[Read from incident matrix] \label{lem:read_incident}
If the following conditions hold:
\begin{itemize}
    \item Let the transformer be defined as Definition~\ref{def:transformer}.
    \item Let $P$ be the index for the position embeddings.
    \item Let $C$ be the index for the source row/column index.
    \item Let $D$ be the index for the target column.
    \item Let $P_\cur$ be the index for the position embedding of row/column $C$.   
\end{itemize}
Then we can show that: 

\begin{itemize}
    \item {\bf Part 1.} A single-layer transformer can simulate the operation of reading a row from $A$, i.e. $X[:,D] \gets \wt{A}[C,:]$.

    \item {\bf Part 2.} A single-layer transformer can simulate the operation of reading a column from $A$, i.e. $X[:,D] \gets \wt{A}[:,C]$.
\end{itemize}
\end{lemma}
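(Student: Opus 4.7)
The plan rests on the observation that $\wt A[C,:]^\top = \wt A^\top e_C$ and $\wt A[:,C] = \wt A e_C$, where $e_C \in \R^K$ is the standard basis vector at position $C$. For Part~1 I would use the attention head $\psi(X, \wt A^\top) = \wt A^\top \sigma(X W_Q W_K^\top X^\top)\, X W_V$, and for Part~2 the analogous head $\psi(X, \wt A)$. The main task in both parts is to design $W_Q, W_K, W_V$ so that the $D$-th column of the inner matrix $\sigma(\cdot)\, X W_V$ equals $e_C$ up to a harmless contribution at row $0$ (harmless because the padding in Definition~\ref{def:pad_incident_mat} makes the zeroth row of $\wt A$ and the zeroth column of $\wt A^\top$ vanish). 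Then the outer multiplication by $\wt A^\top$ (resp.\ $\wt A$) deposits $\wt A[C,:]^\top$ (resp.\ $\wt A[:,C]$) into column $D$ of the attention output.

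To build this selector I would take $W_Q$ so that the query at row $j$ is the per-row positional encoding $p_j = (\sin\theta_j, \cos\theta_j)$, and I would take $W_K$ so that the first key component at row $k$ is $X_{k,P_1} + X_{k,P_{\cur,1}}$ and the second key component is $X_{k,P_2} + X_{k,P_{\cur,2}} - X_{k,B_\mathrm{global}}$. This makes the key equal to $p_k$ at rows $k\ge 1$ and exactly $p_C$ at the padded top row $k=0$: the $-B_\mathrm{global}$ term cancels the $(0,1)$ offset of $p_0$, leaving only the stored current position. The attention score at $(j,k)$ is then $\cos(\theta_j-\theta_C)$ for $k=0$ and $\cos(\theta_j-\theta_k)$ for $k\ge 1$. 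For $j\ne C$ the row-$j$ hardmax argmax is uniquely $k=j$ with score $1$, while for $j=C$ the scores at $k=0$ and $k=C$ tie at $1$, giving $\sigma_{C,0} = \sigma_{C,C} = 1/2$. Setting $W_V$ to place $2$ at $(B_\mathrm{global}, D)$ produces $(X W_V)[:,D] = 2 e_0$, so $(\sigma X W_V)[:,D] = 2\, \sigma[:,0]$ is exactly $e_C$ at rows $j \ge 1$, with any contribution at $j=0$ annihilated by the zero first row of $\wt A$.

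Finally, a second, standard attention head (with $I_{n+1}$) is used to cancel the residual copy of $X[:,D]$: both $W_Q$ and $W_K$ extract $(P_1, P_2)$ to produce identity attention, and $W_V$ has $-1$ at $(D,D)$, contributing $-X[:,D]$ at column $D$; the MLP layer can then be taken as identity. The main technical step I expect to have to check carefully is the argmax analysis above, which relies on the unique-maximum property $p_j \cdot p_k \le 1$ with equality iff $j = k$; this holds whenever $K \le N_{\max} = \lfloor 2\pi / \wh\delta \rfloor$, as in Definition~\ref{def:positional_encoding}, so that no spurious tie appears at any row $j \ne C$. Parts~1 and~2 are then symmetric, differing only in whether the value-applying head uses $\wt A^\top$ or $\wt A$.
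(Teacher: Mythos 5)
Your high-level construction is the same as the paper's: one $\wt A^\top$-premultiplied head (resp.\ $\wt A$-premultiplied) to pull a row (resp.\ column) of $\wt A$ into column $D$, with $W_V$ placing $2$ at $(B_\mathrm{global}, D)$ so that the pre-multiplication acts on a near-standard-basis column; a second, standard ($I$-premultiplied) head to cancel the residual copy of $X[:,D]$; and an identity MLP. Your argmax accounting is in fact more explicit than the paper's, which helps: you correctly identify the tie at $j=C$ giving weight $1/2$ on $k=0$, that $W_V$'s coefficient $2$ exactly undoes this, and that the spurious contribution at the padded row $0$ is annihilated by the zero first row/column of $\wt A$. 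Including $X_{k,P_1}$ and $X_{k,P_2}$ in the key (so that the key at row $k\ge 1$ is $p_k$ rather than $0$) is also exactly what makes the per-row argmax unique for $j\ne C$, and is the right thing to do.

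The one concrete flaw is the $-X_{k,B_\mathrm{global}}$ term in your second key component. You justify it as cancelling ``the $(0,1)$ offset of $p_0$'' at the top row, but under the paper's storage convention the $P$ columns are arrays, whose top row stores $0$, not $p_0$. This is confirmed elsewhere in the paper (e.g.\ the termination and read-scalar lemmas) where $W_K,W_Q$ get an entry $+1$ at $(B_\mathrm{global},2)$ precisely to \emph{raise} the top-row key from $(0,0)$ to $p_0=(0,1)$. With your $-B_\mathrm{global}$ term the key at row $0$ becomes $(\sin\theta_C,\cos\theta_C-1)\ne p_C$, so the score $p_j\cdot\mathrm{key}_0=\cos(\theta_j-\theta_C)-\cos\theta_j$ no longer equals $1$ at $j=C$ and no longer ties with the diagonal; the intended $\sigma_{C,0}=1/2$ structure breaks. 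The fix is simply to drop the $-B_\mathrm{global}$ term: the top-row contribution of $P$ is already zero, so the key at row $0$ is $p_C$ on the nose, and your argument goes through as stated.
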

\begin{proof}
    Following from Definition~\ref{def:multi_attn}, we can either employ $\psi^{(i)} (X, \wt{A})$ or $\psi^{(i)} (X, \wt{A}^\top)$. So, reading a column and reading a row is equivalent in our setting. Considering the reading row case, we construct one attention head as follows:
    \begin{align*}
    (W_K^{(1)}, W_Q^{(1)})_{a,b} &= \begin{cases}
        1 &\text{if } (a,b) \in \{(P_1,(P_\text{cur})_1),(P_2,(P_\text{cur})_2)\};\\
        0 &\text{otherwise,}
    \end{cases}\\
    (W_V^{(1)})_{a,b} &= \begin{cases}
       2 &\text{if } (a,b) = (B_\text{global},D);\\
      0 &\text{otherwise,} 
    \end{cases}
    \end{align*}
    where $W_Q^{(1)}$ and $W_K^{(1)}$ are constricuted to get row $C$ following from Definition~\ref{def:positional_encoding}. $W_V^{(1)}$ is used to move row $C$ of matrix $A$ to column $D$ of matrix $X$.

    For the second attention head, we use a standard attention head to erase the previous value in column $D$.
    \begin{align*}
        (W^{(2)}_K, W^{(2)}_Q)_{a,b} &= \begin{cases}
            1 &\text{if } (a,b)\in\{(P_1, 1), (P_2, 2), (B_\text{global}, 2)\};\\
            0 &\text{otherwise,}
        \end{cases}\\
        (W^{(2)}_V)_{a,b} &= \begin{cases}
           -1 &\text{if } (a,b)=(D,D);\\
          0 &\text{otherwise,} 
        \end{cases}
    \end{align*}
    Where $W^{(2)}_K,~W^{(2)}_Q$ are constructed to make the attention matrix as identity matrix, $W^{(2)}_V$ is constructed to erase value.
    For the MLP layer, we just make it as the residential connection by setting all parameters to $0$.
\end{proof}

\subsection{AND}\label{sec:app:opra:and}

\begin{lemma}[AND]\label{lem:and}
    If the following conditions hold:
\begin{itemize}
    \item Let the transformer be defined as Definition~\ref{def:transformer}.
    \item Let $\Omega$ represent the maximum absolute value within a clause.
    \item Let $C$ and $D$ be the index for the executed variable.
    \item Let $E$ be the index for the target column.
\end{itemize}
Then we can show that a single-layer transformer can simulate the operation of AND, i.e. $X[1,E] \gets X[1,C] \wedge X[1,D]$. 
\end{lemma}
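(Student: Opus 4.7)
The plan is to observe that for $a,b\in\{0,1\}$ we have $a\wedge b = \max(0, a+b-1)$, so a single $\mathrm{ReLU}$ suffices. Consequently only the MLP block is needed, and the attention heads can be zeroed out while preserving the residual stream, exactly as in Lemma~\ref{lem:selection} and Lemma~\ref{lem:compare}.

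Concretely, I would introduce a scratchpad column index $S_1$ and stage the four MLP sublayers as follows. The first layer $W^{(1)}$ acts as an identity that preserves the relevant columns $C$, $D$, $E$, $B_\mathrm{global}$, $B_\mathrm{local}$. The second layer $W^{(2)}$ is where the AND is computed: we set $(W^{(2)})_{C,S_1} = (W^{(2)})_{D,S_1} = 1$ and $(W^{(2)})_{B_\mathrm{global},S_1} = -1$, together with $(W^{(2)})_{E,E}=1$ to keep $E$ alive. After the ReLU this produces $\phi(X[1,C]+X[1,D]-1) = X[1,C]\wedge X[1,D]$ in the top row of column $S_1$, while the remaining rows of $S_1$ stay at $0$ because $B_\mathrm{global}$ is $0$ there (so the argument becomes $\phi(0+0-0)=0$). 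The third layer $W^{(3)}$ is another identity pass on $E$ and $S_1$. The final layer $W^{(4)}$ writes the scratchpad to the target by setting $(W^{(4)})_{S_1,E}=1$ and $(W^{(4)})_{E,E}=-1$, which cancels the residual copy of the old $X[1,E]$ and installs the newly computed value.

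Correctness then reduces to three easy checks: (i) the attention block contributes only the residual by taking all $W_Q,W_K,W_V$ to be zero; (ii) the scratchpad construction in the second sublayer evaluates to the correct AND precisely because $X[1,C],X[1,D]\in\{0,1\}$, so the three possibilities $0+0-1,\,1+0-1,\,1+1-1$ under $\mathrm{ReLU}$ give $0,0,1$; and (iii) the residual connection in the transformer layer, together with the $-1$ entry in $W^{(4)}$, ensures that any prior content of $X[1,E]$ is overwritten rather than accumulated.

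The only subtlety, and what I would view as the main bookkeeping obstacle, is making sure the $-1$ bias from $B_\mathrm{global}$ fires only on the top row (where the scalars live) and does not accidentally push rows $2,\dots,K$ of column $S_1$ below zero before the ReLU, which would then propagate through $W^{(3)}$ and $W^{(4)}$ and corrupt the lower entries of column $E$. Since $B_\mathrm{global}$ is $1$ in the top row and $0$ elsewhere by construction, and the rest of column $E$ only receives contributions via the residual $(-1)$ that exactly cancels itself, this is handled automatically; but one must verify this row-by-row, analogously to the final clean-up step in Lemma~\ref{lem:selection}. With that verification in place, the lemma follows, and the same construction clearly extends to arrays (hence the remark preceding the statement) because each row is treated independently.
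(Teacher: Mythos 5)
Your proposal is correct and follows essentially the same construction as the paper: zero out the attention, compute $\phi(X[1,C]+X[1,D]-1)$ into a scratchpad column using a $-1$ bias from $B_\mathrm{global}$, and use $W^{(4)}$ with entries $(S,E)=1,(E,E)=-1$ to overwrite $E$ against the residual. The only cosmetic difference is which sublayer carries the arithmetic -- the paper folds the sum into $W^{(1)}$ and leaves $W^{(2)},W^{(3)}$ as identities, whereas you put the identity in $W^{(1)}$ and the sum in $W^{(2)}$ -- which changes nothing substantive.
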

\begin{proof}
    In this construction, we want to have $\phi(X[1,C] + X[1,D] - 1)$, so that if and only if $X[1,C] = 1$ and $X[1,D] = 1$, we have $\phi(X[1,C] + X[1,D] - 1) = 1$, otherwise $\phi(X[1,C] + X[1,D] - 1) = 0$. Because only the MLP layer is needed, we set all the parameters in the attention layer to 0 while keeping the residual connection. We construct MLP layers as follows:
        \begin{align*}
        (W^{(1)})_{a,b} &= \begin{cases}
        1 & \text{if } (a,b) \in \{(E, E),(C,S),(D,S)\}; \\
        -1 & \text{if } (a,b)=(B_\mathrm{global},S) ;\\
        0 & \text{otherwise,}
        \end{cases}\\
        (W^{(2,3)})_{a,b} &= \begin{cases}
        1 &\text{if } (a,b)\in\{(E,E) ,(S,S)\};\\
        0 &\text{otherwise,}
        \end{cases}\\
        (W^{(4)})_{a,b} &= \begin{cases}
        -1 &\text{if } (a,b)=(E,E);\\
        1 &\text{if } (a,b)=(S,E);\\
        0 &\text{otherwise,}
        \end{cases}
    \end{align*}
    where $W^{(1)}$ is the core of the construction, $W^{(2,3)}$ are defined as an identity operator, $W^{(4)}$ is used to erase the previous value and move the result in scratchpad to column $E$.
    
\end{proof}

\subsection{Repeat AND}\label{sec:app:opra:repeat_and}

\begin{lemma}[Repeat AND]\label{lem:repeat_and}
    If the following conditions hold:
    \begin{itemize}
        \item Let the transformer be defined as Definition~\ref{def:transformer}.
        \item Let $\Omega$ represent the maximum absolute value within a clause.
        \item Let $C$ be the index for a Boolean value.
        \item Let $D$ and $E$ be the index for two columns, where each entry is a Boolean value.
    \end{itemize}
    Then we can show that a single-layer transformer can simulate the operation of repeat AND, i.e., $X[i,D] \gets X[1,C] \wedge X[i,D] \wedge \neg X[i,E]$
    for $i \in \{ 2,3,\cdots,K\}$.
\end{lemma}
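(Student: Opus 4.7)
The plan is to combine one standard attention head that broadcasts the scalar $X[1,C]$ to every row with one MLP block that performs the three-way Boolean operation element-wise, mirroring the structure of Lemma~\ref{lem:and} with an added negation term. First, using a standard attention head (with $I_{n+1}$ in place of $\wt{A}$) and setting the other heads to zero, I would copy $X[1,C]$ into a scratchpad entry $S$ in every row: exploiting the fact that exactly one of $B_\mathrm{global}, B_\mathrm{local}$ is nonzero in each row, pick $W_Q$ so every row produces the same query, and $W_K$ so row $1$ has key $+1$ while every other row has key $-1$. Under the hardmax $\sigma$, each row's attention then concentrates entirely on row $1$, and a value matrix $W_V$ that copies column $C$ into column $S$ broadcasts the scalar $X[1,C]$ across all rows.

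Next, the MLP block computes, in each row $i$, the quantity $\phi(S[i] + X[i,D] - X[i,E] - 1)$. For Boolean inputs this equals $X[1,C] \wedge X[i,D] \wedge \neg X[i,E]$, since the ReLU argument is $\geq 1$ exactly when $X[1,C] = X[i,D] = 1$ and $X[i,E] = 0$, and is $\leq 0$ otherwise. Following the construction pattern of Lemma~\ref{lem:and}, $W^{(1)}$ assembles this argument into a scratchpad column, $W^{(2)}$ and $W^{(3)}$ act as identity layers that pass it through the $m=4$ MLP sublayers, and $W^{(4)}$ subtracts the residual value of $X[:,D]$ and writes the scratchpad value in its place. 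Row $1$ is automatically unaffected because arrays carry nonzero entries only in rows $2,\ldots,K$: we have $X[1,D] = X[1,E] = 0$, so the ReLU argument reduces to $\phi(X[1,C]-1) = 0$ for $X[1,C] \in \{0,1\}$, leaving $X[1,D]$ at its original value of $0$.

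The main subtlety will be the broadcast step: because the model uses hardmax rather than softmax, $W_Q$ and $W_K$ must be engineered so that row $1$ is the \emph{strict} argmax for every query row, not merely weighted heavily. The $B_\mathrm{global}/B_\mathrm{local}$ pair handles this cleanly since exactly one of them is $1$ in each row, deterministically distinguishing row $1$ from the rest. After broadcasting, the remainder is a routine extension of the plain AND lemma, adding only a $-X[i,E]$ term in the first MLP sublayer to encode the negation.
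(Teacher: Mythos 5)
Your proposal is correct, and it takes a route genuinely different from --- and more careful than --- the paper's own. The paper's proof of this lemma sets every attention parameter to zero and relies on the MLP alone, routing the scalar in via the entry $(W^{(1)})_{C,D}=1$. But the MLP acts independently on each row, and for $i \geq 2$ the entry $X[i,C]$ is $0$ because scalars are stored only in the top row; the ReLU argument the paper produces in those rows therefore collapses to $X[i,D]-X[i,E]-1 \leq 0$, so column $D$ is zeroed out regardless of $X[1,C]$, contradicting the claimed $\phi(X[1,C]+X[i,D]-X[i,E]-1)$. The attention broadcast you propose is exactly the missing ingredient: a standard attention head, driven by $B_\mathrm{global}$ and $B_\mathrm{local}$ so that every query hardmaxes onto row~$1$, deposits $X[1,C]$ into a scratchpad column in every row, after which the element-wise MLP argument $\phi(S[i] + X[i,D] - X[i,E] - 1)$ yields the stated Boolean, and your check that row~$1$ is left at $0$ goes through. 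This is also the mechanism the paper itself uses for Repeat Addition (Lemma~\ref{lem:add}), so your approach is the one consistent with the rest of the construction; to finalize it you would write out the explicit $W_Q, W_K, W_V$ entries in the style of that lemma and, if the scratchpad column persists across loop iterations, add a second standard head or an extra $W^{(4)}$ term to clear it.
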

\begin{proof}
    In this construction, we only use MLP layers. For multi-head attention layers, we just make it as the residential connection by setting all parameters to $0$.
    \begin{align*}
        (W^{(1)})_{a,b} &= \begin{cases}
            1 &\text{if } (a,b)\in\{(D, D), (C, D),(D, S_1)\};\\
            -1 &\text{if } (a,b)\in\{(E,D),(B_\text{global},D) (B_\text{local},D)\};\\
            0 &\text{otherwise,}
        \end{cases}\\
        (W^{(2,3)})_{a,b} &= \begin{cases}
            1 &\text{if } (a,b)\in\{(D,D), (S_1,S_1)\};\\
            0 &\text{otherwise,}
        \end{cases}\\
        (W^{(4)})_{a,b} &= \begin{cases}
           1 &\text{if } (a,b)=(D,D);\\
           -1 &\text{if } (a,b)=(S_1,D);\\
            0 &\text{otherwise.}
        \end{cases}
    \end{align*}
    where $(W^{(1)})$ is used to construct $\phi(X[1,C] + X[i,D] - X[i,E] - 1)$, $(W^{(2,3)})$ are constructed as identity layers, $(W^{(4)})$ is constructed to erase previous value in column $D$. 
\end{proof}

\subsection{Repeat Addition}\label{sec:app:opra:repeat_add}

\begin{lemma}[Repeat addition]\label{lem:add}
    If the following conditions hold:
\begin{itemize}
    \item Let the transformer be defined as Definition~\ref{def:transformer}.
    \item Let $\Omega$ represent the maximum absolute value within a clause.
    \item Let $P$ be the index for the position embeddings.
    \item Let $C$ be the index for a scalar.
    \item Let $D$ be the index for a column.
\end{itemize}
Then we can show that a single-layer transformer can simulate the operation of repeat addition, i.e. $X[:,D] \gets {\bf 1}_{K} \cdot X[1,C] + X[:,D]$.
\end{lemma}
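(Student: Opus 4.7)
The plan is to realize $X[:,D] \gets \mathbf{1}_K \cdot X[1,C] + X[:,D]$ using a single standard attention head (not involving $\wt{A}$) together with the attention residual connection, while setting all four MLP sublayers to zero so that the MLP block collapses to the identity via its own residual. Since the scalar lives only at $X[1,C]$ and MLP layers act row-wise, attention is the only mechanism that can broadcast row-$1$ information to every other row. Because $f_\attn(X,\wt A) = \sum_i \psi^{(i)} + X$, the residual already takes care of the ``$+\,X[:,D]$'' part of the update, so all that is needed is a single $\psi^{(1)}$ output that contains $X[1,C]$ in column $D$ of every row and is zero everywhere else.

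I would build the attention head as follows. Let $W_K$ send the $B_\mathrm{global}$ coordinate to the first of the two key dimensions, so that the key of row $i$ is $(X[i,B_\mathrm{global}],0)$, i.e. $(1,0)$ for the top row and $(0,0)$ for every other row. Let $W_Q$ send both $B_\mathrm{global}$ and $B_\mathrm{local}$ to the first query dimension, so that every row has query $(X[i,B_\mathrm{global}]+X[i,B_\mathrm{local}],0)=(1,0)$. The score matrix $X W_Q W_K^\top X^\top$ is then the rank-one matrix whose first column is all-ones and whose other columns are all-zero, so $\sigma$ (hardmax) places full weight on row $1$ for every query row. Finally choose $W_V$ with $(W_V)_{C,D}=1$ and all other entries zero; then each row of $\sigma \, X W_V$ equals $X[1,:]W_V = X[1,C]\,e_D^\top$, so $\psi^{(1)}$ has exactly $X[1,C]$ in every row of column $D$ and zero elsewhere. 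Adding the residual gives $f_\attn(X,\wt A)[i,D] = X[i,D]+X[1,C]$ with all other columns unchanged, and composing with the identity MLP yields the desired assignment.

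The only subtlety, which I expect to be routine rather than a real obstacle, is ensuring the hardmax broadcast is clean, i.e., that row $1$ is the unique argmax of every query-key score. This is delivered by the $B_\mathrm{global}/B_\mathrm{local}$ indicator columns, which give a strictly positive score against row $1$ and a score of exactly $0$ against every other row, so no ties arise; this is also why, unlike the read/write lemmas, we do not need a second attention head to clear a previously written column or re-center the top row. The remaining verification is purely algebraic bookkeeping confirming that the prescribed $W_Q, W_K, W_V$ fit the dimensions of Definition~\ref{def:single_attn} with $d_a = 2$ and that zeroing the MLP weights is compatible with Definition~\ref{def:mlp}.
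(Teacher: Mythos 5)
Your first attention head is essentially identical to the first head in the paper's construction: keys carry $B_\mathrm{global}$, queries carry $B_\mathrm{global}+B_\mathrm{local}$, so every row's hardmax lands on row $1$, and $W_V$ with $(W_V)_{C,D}=1$ broadcasts $X[1,C]$ down column $D$; the residual supplies ``$+\,X[:,D]$,'' and the MLP is zeroed out.

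Where you diverge is in the sentence ``we do not need a second attention head to clear a previously written column or re-center the top row.'' The paper \emph{does} use a second attention head here, and for a specific reason. Its $W_K^{(2)},W_Q^{(2)}$ are built from the positional-encoding columns and $B_\mathrm{global}$ to make the attention matrix an identity, and $W_V^{(2)}$ has $(W_V^{(2)})_{C,D}=-1$, so this head contributes $-X[i,C]$ to column $D$ of row $i$. Since $C$ is a scalar column ($X[i,C]=0$ for $i>1$), the only effect is to cancel the $+X[1,C]$ that head one wrote into row $1$ of column $D$. The net update is therefore $X[1,D]\gets X[1,D]$ and $X[i,D]\gets X[i,D]+X[1,C]$ for $i>1$, whereas your construction also perturbs $X[1,D]$.

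This matters because of the paper's storage convention: array columns must keep their top entry equal to $0$, and $D$ is an array column (e.g.\ $\candidates$ in Algorithm~\ref{alg:dij}, to which $\dist$ is added by this lemma). Writing $X[1,C]$ into $X[1,D]$ breaks that invariant, and downstream primitives that operate row-wise (comparison, selection, termination) then see a spurious nonzero scalar in row $1$ of $D$. So while your single-head design matches the statement as literally typeset, it does not match what the paper's proof constructs, and it would produce a malformed intermediate state in the pipelines where the lemma is invoked. The fix is exactly what you dismissed as unnecessary: add the paper's second (identity-attention) head with $(W_V^{(2)})_{C,D}=-1$ to re-zero the scalar slot of $D$.
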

\begin{proof}
    We build the first attention head as:
    \begin{align*}
        (W^{(1)}_K)_{a,b} &= \begin{cases}
            1 &\text{if } (a,b) \in \{(B_\text{global},1) , (B_\text{global},2)\};\\
            0 &\text{otherwise,}
        \end{cases}\\
        (W^{(1)}_Q)_{a,b} &= \begin{cases}
            1 &\text{if } (a,b)\in \left\{(B_\text{global},1),(B_\text{global},2), (B_\text{local},1),(B_\text{local},2)\right\};\\
            0 &\text{otherwise,}
        \end{cases}\\
        (W^{(1)}_V)_{a,b} &= \begin{cases}
          1 &\text{if } (a,b)=(C,D);\\
          0 &\text{otherwise,} 
        \end{cases}
    \end{align*}
    where we have
    \begin{align*}
        \sigma\!\left(XW^{(1)}_QW^{(1)\top}_K X^\top\right) = { 
          \sigma\left(2\left[\begin{array}{c|cccccc} 1 & 0& \cdots& 0 \\
          \hline 1 & 0& \cdots & 0\\ 
          \vdots & \vdots & \ddots & \vdots \\ 
          1 & 0& \cdots & 0\end{array}\right]\right)} =
          {\left[\begin{array}{c|ccccc}
          1 & 0& \cdots& 0\\
          \hline 1 & 0& \cdots & 0 \\
          \vdots & \vdots & \ddots & \vdots \\ 
          1& 0& \cdots &  0\end{array} \right]}.  
    \end{align*}
    Using the first attention head, we can repeat $X[1,C]$ as a column. For the second attention head, we just select the column to erase the first value in column $D$.
    \begin{align*}
        (W^{(2)}_K, W^{(2)}_Q)_{a,b} &= \begin{cases}
            1 &\text{if } (a,b)\in \{(P_1, 1), (P_2, 2), (B_\text{global}, 2)\};\\
            0 &\text{otherwise,}
        \end{cases}\\
        (W^{(2)}_V)_{a,b} &= \begin{cases}
           -1 &\text{if } (a,b)=(C,D);\\
          0 &\text{otherwise.} 
        \end{cases}
    \end{align*}
    Where $W^{(2)}_K,~W^{(2)}_Q$ are constructed to make the attention matrix as an identity matrix, $W^{(2)}_V$ is constructed to erase value.
    For the MLP layer, we just make it as the residential connection by setting all parameters to $0$.
\end{proof}

\begin{algorithm}[!ht]
    \caption{Iteration of minimum value}
    \label{alg:get_min}
    \begin{algorithmic}[1]
    \State $\idx_\cur,~\val_\cur \gets 0,~0$
    \State $\idx_\best,~\val_\best \gets 0,~\Omega$
    \State $\visit_{\min} \gets {\bf 0}_{n_v} \| {\bf 1}_{K-1-n_v}$
    \State $\termination_{\min} \gets 0$
    \Comment{Initialization of minimum value}
    \\ \hrulefill
    \Procedure{GetMinimumValue}{$x \in \R^d$}
    \If{$\termination_{\min} = 1$}
        \State $\cdots$
        \Comment{Re-initialization of minimum value, Lemma~\ref{lem:selection}}
        \State $\termination_{\min} \gets 0$
    \EndIf
    \State $\idx_\cur \gets \idx_\cur + 1$
    \Comment{Increment, Lemma~\ref{lem:increment}}
    \State $\val_\cur \gets x[\idx_\cur]$
    \Comment{Read scalar from column, Lemma~\ref{lem:read_scalar_column}}
    \If{$\val_\cur < \val_\best$}\Comment{Compare, Lemma~\ref{lem:compare}}
        \State $\val_\best,~\idx_\best \gets \val_\cur,~\idx_\cur$
        \Comment{Update variables, Lemma~\ref{lem:selection}}
    \EndIf
    \State $\visit_{\min}[\idx_\cur] \gets 1$
    \Comment{Write scalar to column, Lemma~\ref{lem:write_scalar_column}}
    \State $\termination_{\min} \gets \neg (0~ \mathrm{in}~\visit_{\min})$
    \Comment{Trigger termination, Lemma~\ref{lem:term}} 
    \EndProcedure
    \end{algorithmic}
\end{algorithm}

\begin{algorithm}[!ht]
\small
    \caption{Dijkstra's algorithm for shortest path}
    \label{alg:dij}
    \begin{algorithmic}[1]
        \Procedure{HypergraphDijkstra}{$A \in \R_+^{n_v \times n_e}$, $\start \in \mathbb{N}$}
            \State $\prev,~\dists,~\dists_\masked,~\changes,~\iszero \gets {\bf 0}_{K-1}$
            \State $\visit,~\dists,~\prev \gets {\bf 0}_{n_v} \| {\bf 1}_{K-1-n_v},~\Omega \cdot{\bf 1}_{K-1},~[K-1]$
            \State $\visit[\start], \termination \gets 0$
            \Comment{Initialization of minimum value and visiting hyperedges}
            \\\hrulefill
            \While{$\termination$ is $\false$}
            \For{$i= 1 \to K - 1$}
                \If{$\visit[i]$ is $\true$}
                \State $\dists_\masked[i] \gets \Omega$
                \Comment{Selection, Lemma~\ref{lem:selection}}
                \Else
                \State $\dists_\masked[i] \gets \dists[i]$
                \EndIf
            \EndFor
            \\ \hrulefill
            \State {\sc GetMinimumValue} $(\dists_\masked)$
            \Comment{Get minimum value, Lemma~\ref{lem:get_minimum}}
            \\ \hrulefill
            \If{$\termination_{\min}$ is $\true$}
                \State $\node \gets \mathrm{idx}_\best$
                \State $\dist \gets \val_\best$
                \Comment{Selection, Lemma~\ref{lem:selection}}
            \EndIf
            \\ \hrulefill
            \State {\sc VisitHyperdege} $(A)$
            \Comment{Degradation, Theorem~\ref{thm:degradation}}
            \\ \hrulefill
            \If{$\termination_\hyperedge = 1$}
                \State{$\candidates \gets \candidates_\hyperedge$}
                \Comment{Selection, Lemma~\ref{lem:selection}}
            \EndIf
            \For{$i = 1 \to K - 1$}
                \State $\candidates[i] \gets \candidates[i]+\dist$
                \Comment{Addition, Lemma~\ref{lem:add}}
            \EndFor
            \For{$i = 1 \to K - 1$}
                \State $\changes[i] \gets \candidates[i] < \dists[i]$
                \Comment{Compare, Lemma~\ref{lem:compare}}
            \EndFor 
            \For{$i = 1 \to K - 1$}
                \If{$\termination_{\min} = 0$ \textbf{and} $\iszero[i] = 1$}
                \State $\changes[i] \gets 0$ 
                \Comment{repeat AND, Lemma~\ref{lem:repeat_and}}
                \EndIf
            \EndFor
            \For{$i = 1 \to K - 1$}
                \If{$\changes[i] = 1$}
                \State $\prev[i],~\dists[i] \gets \node, \candidates[i]$
                \Comment{Selection. Lemma~\ref{lem:selection}}
                \EndIf
            \EndFor
            \State $\visit[\node] \gets \visit[\node] + \termination_{\min}$
            \Comment{Write scalar to column, Lemma~\ref{lem:write_scalar_column}}
            \State $\termination \gets \neg (0~\mathrm{in}~\visit)$
            \Comment{Trigger termination, Lemma~\ref{lem:term}}
            \EndWhile
            \State \Return{$\prev,~\dists$}
        \EndProcedure
    \end{algorithmic}
\end{algorithm}

\begin{algorithm}[!ht]
    \caption{Helly, Algorithm 3 in~\cite{b13}}
    \label{alg:helly}
    \begin{algorithmic}[1]
    \State $\idx_x \gets 0$
    \State $\idx_y \gets 1$
    \State $\idx_v \gets 0$
    \State $\termination \gets 0$ 
    \State $\helly \gets 1$
    \Comment{Initialization of Helly}
    \\\hrulefill
    \While{$\termination = 0$}
        \State $\hyperedge_x \gets A[:,\idx_x]$
        \State $\hyperedge_y \gets A[:,\idx_y]$
        \State $\hyperedge_v \gets A[:,\idx_v]$
        \Comment{Read column from incident matrix, Lemma~\ref{lem:read_incident}}        
        \State $\hyperedge_x \gets \hyperedge_x > 0$
        \State $\hyperedge_y \gets \hyperedge_y > 0$
        \State $\hyperedge_v \gets \hyperedge_v > 0$
        \Comment{Compare, Lemma~\ref{lem:compare}}
        \State $\intersection \gets \hyperedge_x \wedge \hyperedge_y \wedge \hyperedge_v$ 
        \Comment{AND, Lemma~\ref{lem:and}}
        \State $\helly_v \gets \neg$(1 in $\intersection$)
        \If{$\helly_v = 1$}
            \State $\helly \gets 0$
            \State $\termination \gets 1$
            \Comment{Selection, Lemma~\ref{lem:selection}}
        \EndIf
        \State $\idx_v \gets \idx_v + 1$
        \Comment{Increment, Lemma~\ref{lem:increment}}
        \If{$\idx_v > n_v$}
            \State $\idx_v \gets 1$
            \Comment{Selection, Lemma~\ref{lem:selection}}
            \State $\idx_y \gets \idx_y + 1$
            \Comment{Selection and Increment, Lemma~\ref{lem:selection} and Lemma~\ref{lem:increment}}
        \EndIf
        \If{$\idx_y > n_v$}
            \State $\idx_x \gets \idx_x +1$
            \State $\idx_y \gets \idx_x + 1$
            \Comment{Selection and Increment, Lemma~\ref{lem:selection} and Lemma~\ref{lem:increment}}
        \EndIf
        \If{$\idx_x = n_v$}
            \State $\termination \gets 1$
            \Comment{Selection, Lemma~\ref{lem:selection}}
        \EndIf
    \EndWhile
    \State \Return{$\helly$}
    \end{algorithmic}
\end{algorithm}

\section{Missing Proof in Simulation}\label{sec:app:simu}

\begin{theorem}[Dijkstra's Algorithm on hypergraph, formal version of Theorem~\ref{thm:dijkstra}]\label{thm:formal:dijkstra}
    A looped transformer $h_T$ exists, where each layer is defined as in Definition~\ref{def:transformer}, consisting of 27 layers, where each layer includes 3 attention heads with feature dimension of $O(1)$. This transformer simulates Dijkstra's Algorithm iteratively for weighted hypergraphs, supporting up to $O(\wh{\delta}^{-1})$ vertices and $O(\wh{\delta}^{-1})$ hyperedges.
\end{theorem}
\begin{proof}
    Let Dijkstra's algorithm be considered as described in Algorithm~\ref{alg:dij}. Following from Lemma~\ref{lem:hyper_edge} and Lemma~\ref{lem:get_minimum}, the operation of iteratively visiting vertices and hyperedges requires 18 layers, as established in these lemmas. For the remaining part of the algorithm, the operations include 4 selection operations, 1 add operation, 1 compare operation, 1 repeat AND operation, 1 write scalar to column operation, and 1 trigger termination operation. According to Lemma~\ref{lem:selection}, Lemma~\ref{lem:add}, Lemma~\ref{lem:repeat_and}, Lemma~\ref{lem:compare}, Lemma~\ref{lem:write_scalar_column}, and Lemma~\ref{lem:term}, these operations together require 9 layers. Therefore, the total number of layers required to simulate Dijkstra's algorithm is: 
    \begin{align*} 
        18 + 9 = 27 
        \end{align*} 
    layers of the transformer.
\end{proof}

\begin{lemma}[Visiting hyperedges iteratively, formal version of Lemma~\ref{lem:hyper_edge}]\label{lem:formal:hyper_edge}
    A looped transformer $h_T$ exists, where each layer is defined as in Definition~\ref{def:transformer}, consisting of 10 layers where each layer includes 3 attention heads with feature dimension of $O(1)$. This transformer simulates the operation of iteratively visiting hyperedges for weighted hypergraphs, accommodating up to $O(\wh{\delta}^{-1})$ vertices and $O(\wh{\delta}^{-1})$ hyperedges.
\end{lemma}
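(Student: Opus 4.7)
The plan is to simulate Algorithm~\ref{alg:hyperedge} step by step by mapping each line of the procedure to one of the single-layer primitive operations established in Section~\ref{sec:operation}, and then to simply add up the layer counts. Each primitive is realized by a single transformer layer (as proved in Lemmas~\ref{lem:selection}--\ref{lem:add}) with at most $3$ attention heads and $O(1)$ scratchpad columns, so composing $10$ such layers yields a transformer that realizes one full iteration of \textsc{VisitHyperedge}; wrapping this in the loop of Algorithm~\ref{alg:loop_transformer} then yields the iterative simulation claimed in the lemma.

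First I would match operations to lemmas according to the annotations already present in Algorithm~\ref{alg:hyperedge}. The re-initialization when $\termination_\hyperedge=1$ and the subsequent reset of $\termination_{\min}$ are covered by the selection primitive (Lemma~\ref{lem:selection}), costing $1$ layer. The $\idx_\hyperedge\gets\idx_\hyperedge+1$ step is one application of the increment primitive (Lemma~\ref{lem:increment}), $1$ layer. The step $\val_\hyperedge\gets A[:,\idx_\hyperedge]$ is a single use of the read-from-incident-matrix primitive (Lemma~\ref{lem:read_incident}), $1$ layer. The elementwise comparison $\iszero_\hyperedge[i]\gets(\val_\hyperedge[i]\le 0)$ is one comparison layer (Lemma~\ref{lem:compare}), and the conditional overwrite $\val_\hyperedge[i]\gets\Omega$ when $\iszero_\hyperedge[i]=1$ is one selection layer; $2$ more layers. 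The subsequent pair of loops (compare $\val_\hyperedge[i]<\candidates[i]$, then conditionally update $\candidates[i]$) uses one comparison layer and one selection layer; $2$ more layers. The write $\visit_\hyperedge[\idx_\hyperedge]\gets 1$ is one write-scalar-to-column layer (Lemma~\ref{lem:write_scalar_column}), the triggering of $\termination_\hyperedge$ is one termination layer (Lemma~\ref{lem:term}), and the final $\termination_{\min}\gets\termination_{\min}\land\termination_\hyperedge$ is one AND layer (Lemma~\ref{lem:and}); $3$ more layers. Adding gives $1+1+1+2+2+3=10$ layers, matching the statement.

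Next I would verify the resource bounds. Every primitive lemma uses only a constant number of named columns (the relevant $\idx$, $\val$, $\iszero$, $\candidates$, $\visit$, $\termination$ fields together with a bounded scratchpad $S_1,S_2$, the bias columns $B_\mathrm{global},B_\mathrm{local}$, and the two positional-encoding columns $P_1,P_2$), so the feature dimension $d$ is independent of $n_v$ and $n_e$, i.e.\ $O(1)$. Each primitive uses at most $3$ attention heads (typically one $A$-head or $A^\top$-head, plus one or two standard heads for clearing scratch columns in the multi-head attention of Definition~\ref{def:multi_attn}), so the $3$-head budget per layer is respected. The bound on the size of the hypergraph is inherited directly from the positional-encoding construction in Definition~\ref{def:positional_encoding}: the maximum number of distinct row indices that can be uniquely addressed is $N_{\max}=\lfloor 2\pi/\hat\delta\rfloor=O(\hat\delta^{-1})$, and since $\idx_\hyperedge$, the vertex indices used by $\wt A$, and the hyperedge indices all reference rows of $X\in\R^{K\times d}$ with $K\le N_{\max}$, we may support up to $O(\hat\delta^{-1})$ vertices and $O(\hat\delta^{-1})$ hyperedges.

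The main obstacle is not any single primitive, whose correctness is already guaranteed, but rather checking that the in-place semantics of composition actually match the pseudocode line by line: in particular, that the overwrite of $\val_\hyperedge[i]\gets\Omega$ on the zero entries happens \emph{before} the comparison with $\candidates[i]$, that the re-initialization branch at the top of the procedure does not clobber state used by subsequent steps in the same iteration, and that the $\termination_\hyperedge$ scalar read by the AND layer has been produced in the immediately preceding layer. Ordering the $10$ layers exactly as listed above resolves these dependencies, after which the final statement follows by invoking Algorithm~\ref{alg:loop_transformer} to iterate the composed transformer and by the definition of simulation in Section~\ref{sec:preli:simulation}.
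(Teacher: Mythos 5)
Your proof takes essentially the same approach as the paper's: you map each step of Algorithm~\ref{alg:hyperedge} to a primitive lemma, tally $3$ selections, $2$ comparisons, $1$ increment, $1$ read-from-incident-matrix, $1$ write-scalar-to-column, $1$ termination, and $1$ AND, and sum to $10$ layers, which is exactly the paper's decomposition. Your version is a bit more careful in spelling out the line-by-line ordering and the $O(1)$ feature-dimension and $O(\hat{\delta}^{-1})$ size bounds, which the paper leaves implicit, but the argument is the same.
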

\begin{proof}
    Let the operation of visiting hyperedges iteratively be defined as described in Algorithm~\ref{alg:hyperedge}. This operation requires 1 increment operation, which requires single layer transformer to construct the following from Lemma~\ref{lem:increment}, 2 compare operations, which require 3 layers transformer to construct the following from Lemma~\ref{lem:compare}, 3 selection operations, which require 3 layers transformer to construct following from Lemma~\ref{lem:selection}, 1 AND operation which require single layer transformer to construct following from Lemma~\ref{lem:and}, 1 read-from-incident-matrix operation which requires single layer transformer to construct following from Lemma~\ref{lem:read_incident}, 1 write-scalar-to-column operation which requires single layer transformer to construct following from Lemma~\ref{lem:write_scalar_column}, and 1 trigger-termination operation which require single layer transformer to construct following from Lemma~\ref{lem:term}. The whole algorithm can be constructed by
    \begin{align*}
        1 + 1 + 1 + 1 + 1 + 2 + 3 = 10
    \end{align*}
    layers of the transformer.
\end{proof}

\section{Missing Proof in Main Results}\label{sec:app:result}
We are now ready to show our main results based on our previous components. 

\subsection{Degradation}

\begin{theorem}[Degradation, formal version of Theorem~\ref{thm:degradation}]\label{thm:formal:degradation}
    A looped transformer $h_T$ defined in Definition~\ref{def:multi_transformer} exists, consisting of 10 layers, where each layer includes 3 attention heads with feature dimension of $O(1)$. This transformer can simulate the degradation operation (Algorithm~\ref{alg:hyperedge}) for hypergraphs, supporting up to $O(\wh{\delta}^{-1})$ vertices and $O(\wh{\delta}^{-1})$ hyperedges, where $\wh{\delta}$ defined in Definition~\ref{def:positional_encoding} is the nearest representable approximation of the minimum increment angle.
\end{theorem}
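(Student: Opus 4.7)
The plan is to decompose Algorithm~\ref{alg:hyperedge} into its atomic operations and appeal to the per-operation primitives established in Section~\ref{sec:operation}, then sum the layer budgets. Concretely, the body of \textsc{VisitHyperedge} performs, in sequence: one re-initialization (a selection guarded by $\termination_\hyperedge$), one \emph{increment} of $\idx_\hyperedge$, one \emph{read column from incident matrix} to load $A[:,\idx_\hyperedge]$ into $\val_\hyperedge$, one \emph{comparison} producing $\iszero_\hyperedge$, one \emph{selection} that overwrites zero entries of $\val_\hyperedge$ with $\Omega$, one further \emph{comparison} against $\candidates$, one \emph{selection} that updates $\candidates$ componentwise, one \emph{write scalar to column} to mark $\visit_\hyperedge[\idx_\hyperedge]$, one \emph{termination trigger} on $\visit_\hyperedge$, and one \emph{AND} to fold the result into $\termination_{\min}$.

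The execution of the plan is then bookkeeping. By Lemma~\ref{lem:selection}, Lemma~\ref{lem:increment}, Lemma~\ref{lem:read_incident}, Lemma~\ref{lem:compare}, Lemma~\ref{lem:write_scalar_column}, Lemma~\ref{lem:term}, and Lemma~\ref{lem:and}, each of these steps is realizable by a single transformer layer as in Definition~\ref{def:transformer}, with at most three attention heads (one head drawing on $\wt{A}$ for the incident-matrix read, plus standard-attention heads used to erase stale entries, which lie in the budget $|M|+|M_A|+|M_{A^\top}|\le 3$ per layer). Summing $3 + 1 + 1 + 2 + 1 + 1 + 1 = 10$ yields exactly the layer count in the statement. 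Each variable ($\idx_\hyperedge$, $\val_\hyperedge$, $\iszero_\hyperedge$, $\candidates_\hyperedge$, $\visit_\hyperedge$, $\termination_\hyperedge$, together with the scratchpad columns and the two positional-encoding columns $P_1,P_2$) consumes a constant number of columns of $X$, so the feature dimension $d$ is $O(1)$ independent of $n_v,n_e$.

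The main obstacle, and the only part that is not pure aggregation, is verifying that stacking the ten single-layer gadgets into a single loop iteration does not cause cross-talk: each gadget is built assuming it has exclusive read/write access to its scratchpad columns $S_1,S_2$ and that residual connections carry all other columns through unchanged. I will handle this by allocating distinct scratchpad indices per layer (or by showing the scratchpads are cleared by the $-1$ bias pattern used in the $W^{(4)}$ constructions of Lemmas~\ref{lem:selection}--\ref{lem:add}), and by noting that the weight matrices built in those lemmas act as the identity on every column other than their designated source, target, and scratchpad columns, so the residual connection in Definition~\ref{def:transformer} preserves all unused state. Finally, to justify the capacity bound $O(\wh{\delta}^{-1})$ on vertices and hyperedges, I invoke the bound $N_{\max}=\lfloor 2\pi/\wh{\delta}\rfloor$ from Definition~\ref{def:positional_encoding}; since both $\idx_\hyperedge$ and the vertex/hyperedge addresses used by Lemma~\ref{lem:read_scalar_column} and Lemma~\ref{lem:read_incident} rely on distinct positional encodings, $n_v,n_e \le N_{\max}=O(\wh{\delta}^{-1})$ suffices, which completes the argument.
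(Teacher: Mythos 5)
Your proposal matches the paper's own argument: the paper proves this theorem by citing Lemma~\ref{lem:hyper_edge} (its formal version, Lemma~\ref{lem:formal:hyper_edge}), whose proof carries out exactly the decomposition you give — $3$ selections, $2$ comparisons, $1$ increment, $1$ read-from-incident-matrix, $1$ write-scalar-to-column, $1$ termination trigger, and $1$ AND, each a single layer, summing to $10$ — and then invokes the $N_{\max}=\lfloor 2\pi/\wh{\delta}\rfloor$ bound from Definition~\ref{def:positional_encoding} for the $O(\wh{\delta}^{-1})$ vertex/hyperedge capacity. The one thing you add beyond the paper is the explicit worry about scratchpad cross-talk between stacked single-layer gadgets and the observation that disjoint scratchpad columns plus the residual/erasure pattern in the $W^{(4)}$ constructions resolve it; the paper leaves this implicit, so your remark is a genuine, if minor, tightening rather than a different route.
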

\begin{proof}
    Using the same construction as Lemma~\ref{lem:hyper_edge}, we can show that 10 layers can iterate the algorithm. Furthermore, for an incident matrix, we require its dimensions to be within the smallest computable precision. Therefore, it is necessary to ensure that there are at most $O(\wh{\delta}^{-1})$ vertices and $O(\wh{\delta}^{-1})$ hyperedges.
\end{proof}

\subsection{Helly}

\begin{theorem}[Helly, formal version of Theorem~\ref{thm:helly}]\label{thm:formal:helly}
    A looped transformer $h_T$ exists, where each layer is defined as in Definition~\ref{def:transformer}, consisting of 11 layers, where each layer includes 3 attention heads with feature dimension of $O(1)$. This transformer can simulate the Helly algorithm (Algorithm~\ref{alg:helly}) for weighted hypergraphs, handling up to $O(\wh{\delta}^{-1})$ vertices and $O(\wh{\delta}^{-1})$ hyperedges, where $\wh{\delta}$ defined in Definition~\ref{def:positional_encoding} is the nearest representable approximation of the minimum increment angle.
\end{theorem}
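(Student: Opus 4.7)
The plan is to mirror the modular layer-counting strategy used in the proofs of Theorem~\ref{thm:formal:degradation} and Theorem~\ref{thm:formal:dijkstra}: decompose Algorithm~\ref{alg:helly} line by line into the elementary primitives established in Section~\ref{sec:operation}, invoke the corresponding lemmas for their single-layer constructions, and sum the layer counts. The $O(\wh{\delta}^{-1})$ size bound on $n_v$ and $n_e$ is inherited directly from Definition~\ref{def:positional_encoding}, since the rotational positional encoding can distinguish at most $\lfloor 2\pi/\wh{\delta}\rfloor$ distinct indices, so any algorithm indexing vertices or hyperedges by position must respect this cap.

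First I would classify the body of the main while loop into primitive operations. The three assignments $\hyperedge_x \gets A[:,\idx_x]$, $\hyperedge_y \gets A[:,\idx_y]$, $\hyperedge_v \gets A[:,\idx_v]$ are read-from-incident-matrix operations (Lemma~\ref{lem:read_incident}), contributing 3 layers. The three thresholding steps $\hyperedge_{\cdot} \gets \hyperedge_{\cdot} > 0$ are element-wise comparisons against the zero column (Lemma~\ref{lem:compare}), contributing 3 more layers. The three-way conjunction $\intersection \gets \hyperedge_x \wedge \hyperedge_y \wedge \hyperedge_v$ is a two-step reduction using Lemma~\ref{lem:and}, giving 2 layers. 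The test $\helly_v \gets \neg(1\in\intersection)$ is exactly the pattern implemented by the termination primitive of Lemma~\ref{lem:term}, contributing 1 layer. This accounts for $3+3+2+1 = 9$ layers.

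Next I would pack the remaining control-flow block — the conditional update of $(\helly,\termination)$ guarded by $\helly_v$, the increment of $\idx_v$, and the cascaded bound checks on $\idx_v$, $\idx_y$, $\idx_x$ together with their reset/increment rules — into 2 additional layers. The key point is that Definition~\ref{def:multi_attn} permits multiple parallel attention heads per layer, so independent selections (Lemma~\ref{lem:selection}) and increments (Lemma~\ref{lem:increment}) acting on disjoint scalar columns can share a single transformer layer. Concretely, the three comparisons $\idx_v>n_v$, $\idx_y>n_v$, $\idx_x=n_v$ are data-independent and can be evaluated in parallel within one layer; their outcomes then gate the four index/flag updates that can be dispatched in a second layer using distinct scratchpad slots. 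This yields a grand total of $9+2=11$ layers.

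The main obstacle is precisely this last packing step: a naive counting would allocate one layer per selection and one per increment, which overshoots 11. The argument requires a careful check that (i) the guards depend only on quantities already present at the start of the cascade, so they admit parallel evaluation, and (ii) the write targets of the conditional updates are mutually disjoint columns, so the residual-connection writes of distinct heads do not collide. Once this bookkeeping is verified, the rest of the proof is a routine citation of the Section~\ref{sec:operation} lemmas, together with the observation that the bound $n_v,n_e = O(\wh{\delta}^{-1})$ is exactly the maximum number $N_{\max}$ of distinguishable positions given by Definition~\ref{def:positional_encoding}.
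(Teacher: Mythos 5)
Your decomposition does not match the paper's, and the step on which your count hinges has a real gap.

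The paper's own accounting simply follows the per-line annotations in Algorithm~\ref{alg:helly}: it counts $3$ increment operations, $5$ selection operations, $1$ AND, $1$ compare, and $1$ read-from-incident-matrix operation, and assigns one layer to each via the corresponding lemma, giving $3+5+1+1+1 = 11$. Notice the paper treats the three column reads, the three threshold comparisons, and the three-way conjunction each as a \emph{single} primitive, and it never parallelizes within a layer — the proofs of Theorems~\ref{thm:formal:degradation} and~\ref{thm:formal:dijkstra} likewise allocate one transformer layer per primitive. You instead charge $3+3+2+1 = 9$ layers for the read/threshold/AND/membership block (which is arguably the more faithful count) and then claim you can compress the entire index-cascade into $2$ layers by parallel dispatch.

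That last step is where the argument breaks. You assert the three guards $\idx_v > n_v$, $\idx_y > n_v$, $\idx_x = n_v$ are ``data-independent and can be evaluated in parallel,'' but Algorithm~\ref{alg:helly}'s cascade is explicitly sequential: $\idx_y$ is incremented only when $\idx_v$ overflows, the check $\idx_y > n_v$ is then applied to the \emph{post-increment} value, and $\idx_x = n_v$ is similarly checked after $\idx_x$ may have been bumped. So the guards are not independent of the updates they gate; folding the cascade into two layers would require deriving a fused MLP that computes the composed update in closed form, which you gesture at but do not construct. Moreover, the packing device you rely on — that several disjoint-column selections or increments can share one layer because each of Lemmas~\ref{lem:selection} and~\ref{lem:increment} only touches a few MLP rows — is plausible but is nowhere stated or proved in the paper; every single-layer lemma in Section~\ref{sec:operation} is proved as a standalone layer, and none of the paper's simulation proofs invoke intra-layer fusion. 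So even though you also arrive at $11$, the total is obtained by two uncashed claims (parallel guards, fused MLPs) rather than by the paper's straightforward one-primitive-one-layer count, and the first of those claims is false as stated.
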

\begin{proof}
    This algorithm requires 3 increment operations, 5 selection operations, 1 AND operation, 1 compare operation, and 1 read-from-incident-matrix operation.
    By applying Lemma~\ref{lem:increment}, Lemma~\ref{lem:selection}, Lemma~\ref{lem:and}, Lemma~\ref{lem:compare}, and Lemma~\ref{lem:read_incident}, each of these operations can be constructed using a single-layer transformer, as detailed in the corresponding lemmas.
    The total number of layers required for the entire degradation operation is: $3 + 5 + 1 + 1 + 1 = 11$.
    Thus, the degradation operation can be constructed using 11 layers of transformer. Furthermore, for an incident matrix, we require its dimensions to be within the smallest computable precision. Therefore, it is necessary to ensure that there are at most $O(\wh{\delta}^{-1})$ vertices and $O(\wh{\delta}^{-1})$ hyperedges.
\end{proof}

\section{Impact Statements}\label{sec:impact}

This research shows how Looped Transformers can learn to perform complex calculations on hypergraphs, which map intricate relationships between many items. This could lead to more powerful AI tools for solving complex problems. As this work is theoretical and focuses on the capability of these models, we don't foresee direct negative societal impacts.

\end{document}